\documentclass{article} 
\usepackage{nips15submit_e,times}
\usepackage{hyperref}
\usepackage{url}
\usepackage{amsfonts, amsmath, amssymb, amsthm, constants,comment}
\usepackage{graphicx}
\usepackage{epsfig, multirow, placeins}
\usepackage{algorithm}
\usepackage{algorithmic}

\usepackage{tikz}
\usepackage{tikz}
\usepackage{hyperref}
\hypersetup{colorlinks=true, linkcolor=red, citecolor=blue, urlcolor=darkgreen}

\usetikzlibrary{arrows,calc}
\usetikzlibrary{mindmap,trees,backgrounds}
\usetikzlibrary{positioning}
\usepackage{verbatim}
\usepackage{caption}
\usepackage{subcaption}
\usepackage{bbm}
\newtheorem{theorem}{Theorem}

\newtheorem{lemma}{Lemma}

\newtheorem{remark}{Remark}

\newtheorem{corollary}{Corollary}

\providecommand{\nor}[1]{\left\lVert {#1} \right\rVert}

\providecommand{\scalT}[2]{\left\langle{#1},{#2}\right\rangle}

\newcommand{\EE}{\mathcal E}

\def\argmin{\operatornamewithlimits{arg\,min}}


\newcommand{\XX}{\mathcal X}

\newcommand{\YY}{\mathcal Y}

\newcommand{\ind}{1{\hskip -2.5 pt}\hbox{I}}

\newcommand{\eps}{\varepsilon}

\title{Learning with Group Invariant Features:\\ A Kernel Perspective.}

\author{
Youssef Mroueh \\
Multimodal Algorithms \& Engines Group\\
IBM T.J Watson Reseach Center\\
\texttt{mroueh@us.ibm.com} \\
\And
Stephen Voinea$^{*}$ \\
CBMM, MIT. \\
\texttt{voinea@mit.edu} \\
$*$\emph{Co-first author}
\And
Tomaso Poggio \\
CBMM, MIT . \\
\texttt{tp@ai.mit.edu} \\
}

%

\nipsfinalcopy 

\begin{document}

\maketitle

\begin{abstract}
We analyze in this paper a random feature map based on a  theory of invariance (\emph{I-theory}) introduced in \cite{AnselmiLRMTP13}. More specifically, a group invariant  signal signature is obtained through cumulative distributions of group-transformed random projections. Our analysis bridges invariant feature learning with kernel methods, as we show that this feature map defines an expected Haar-integration kernel that is invariant to the specified group action. We show how this non-linear random feature map approximates this group invariant kernel uniformly on a  set of $N$ points. Moreover, we show that it defines a function space that is dense in the equivalent Invariant Reproducing Kernel Hilbert Space. Finally, we quantify error rates of the convergence of the empirical risk minimization, as well as the reduction in the sample complexity of a learning algorithm using such an invariant representation for signal classification, in a classical supervised learning setting.
\end{abstract}
\section{Introduction }
Encoding signals or building similarity kernels that are invariant to the action of a group is a key problem in unsupervised learning, as it reduces the complexity of the learning task and mimics how our brain represents information invariantly to symmetries and various nuisance factors (change in lighting in image classification and pitch variation in speech recognition)  \cite{AnselmiLRMTP13,scat,HintonAuto,BengioCV13}. 
Convolutional neural networks \cite{lecun,KrizhevskySH12} achieve state of the art performance in many computer vision and speech recognition tasks, but require a large amount of labeled examples as well as augmented data, where we reflect symmetries of the world through virtual examples \cite{Niyogi98incorporatingprior, yaser} obtained by applying identity-preserving transformations such as shearing, rotation, translation, etc., to the training data.
In this work, we adopt the approach of \cite{AnselmiLRMTP13}, where the representation of the signal is designed to reflect the invariant properties and model the world symmetries with group actions. The ultimate aim is to bridge unsupervised learning of invariant representations with invariant kernel methods, where we can use tools from classical supervised learning to easily address the statistical consistency and sample complexity questions \cite {vapnik98,stch08}.   
Indeed, many invariant kernel methods and related invariant kernel networks have been proposed. We refer the reader to the related work section for a review (Section \ref{sec:prevwork}) and we start by showing how to accomplish this invariance through group-invariant Haar-integration kernels \cite{HaasdonkVB05}, and then show how random features derived from a memory-based theory of invariances introduced in \cite{AnselmiLRMTP13} approximate such a kernel.
\subsection{Group Invariant Kernels} We start by reviewing group-invariant Haar-integration kernels introduced in \cite{HaasdonkVB05}, and their use in a binary classification problem. This section highlights the conceptual advantages of such kernels as well as their practical inconvenience, putting into perspective the advantage of approximating them with explicit and invariant random feature maps.

\noindent \textbf{Invariant Haar-Integration Kernels.}
We consider a subset $\mathcal{X}$ of the hypersphere in $d$ dimensions  $\mathbb{S}^{d-1}$. Let $\rho_{\XX}$ be a measure on $\mathcal{X}$.
Consider a kernel $k_{0}$ on $\XX$, such as a radial basis function kernel. Let $G$ be a group acting on $\XX$, with a normalized Haar measure $\mu$. $G$ is assumed to be a compact and unitary group.
Define an invariant kernel $\mathcal{K}$ between $x,z \in \XX$ through Haar-integration \cite{HaasdonkVB05} as follows:
\begin{equation}\label{eq:haarInv}
\mathcal{K}(x,z)=\int_{G} \int_{G} k_{0}(gx,g'z) d\mu(g)d\mu(g').
\end{equation}
As we are integrating over the entire group, it is easy to see that:
$\mathcal{K}(g'x,gz)=\mathcal{K}(x,z), ~ \forall g,g' \in G, \forall x,z \in \XX.$
Hence the Haar-integration kernel is invariant to the group action. The symmetry of $\mathcal{K}$ is obvious. Moreover, if $k_0$ is a positive definite kernel, it follows that $\mathcal{K}$ is positive definite as well \cite{HaasdonkVB05}. One can see the Haar-integration kernel framework as another form of data augmentation, since we have to produce group-transformed points in order to compute the kernel.

\noindent \textbf{Invariant Decision Boundary.} Turning now to a binary classification problem, we assume that we are given a labeled training set:
$S=\{(x_i,y_i)\mid x_i \in \XX, y_i \in \YY= \{\pm1\} \}_{i=1}^N.$
In order to learn a decision function $f:\XX\to \YY$, we minimize the following empirical risk induced by an $L$-Lipschitz, convex loss function $V$, with $V'(0)<0$ \cite{bajomc06}:
$\min_{f\in \mathcal{H}_{\mathcal{K}}}\hat{\mathcal{E}}_{V}(f):=\frac{1}{N}\sum_{i=1}^N V(y_if(x_i)) $, where we restrict $f$ to belong to a hypothesis class  induced by the invariant kernel $\mathcal{K}$, the so called Reproducing Kernel Hilbert Space $\mathcal{H}_{\mathcal{K}}$. 
The representer theorem \cite{wahba90} shows that the solution of such a problem, or the optimal decision boundary $f^*_{N}$ has the following form:
$f^*_{N}(x)=\sum_{i=1}^N \alpha^*_i\mathcal{K}(x,x_i).$
Since the kernel $\mathcal{K}$ is group-invariant it follows that :
$f^*_{N}(gx)= \sum_{i=1}^N \alpha_i \mathcal{K}(gx,x_i)=  \sum_{i=1}^N \alpha_i \mathcal{K}(x,x_i)=f^*_{N}(x), ~ \forall g \in G.$
Hence the   the decision boundary  $f^*$is group-invariant as well, and we have:
$f^*_{N}(gx)=f^*_{N}(x), \forall g \in G, \forall x \in \XX.$

\noindent \textbf{Reduced Sample Complexity.}
We have shown that a group-invariant kernel induces a group-invariant decision boundary, but how does this translate to the sample complexity of the learning algorithm?
To answer this question, we will assume that the input set $\XX$ has the following structure:
$\XX=\XX_{0}\cup \mathcal{G}\XX_{0}, ~ \mathcal{G}\XX_{0}=\{z | z= gx, x\in \XX_{0}, g\in G /\ \{e\} \}, $
where $e$ is the identity group element.\\
This structure implies that for a function $f$ in the invariant RKHS $\mathcal{H}_{\mathcal{K}}$, we have:
$$ \forall z \in \mathcal{G}\mathcal{X}_0, \exists ~x \in \mathcal{X}_0, \exists~ g \in G \text{ such that, } z=gx, \text{ and } f(z)=f(x).$$
Let $\rho_{y}(x)=\mathbb{P}(Y=y|x)$ be the label posteriors. We assume that $\rho_{y}(gx)=\rho_{y}(x) , \forall g\in G$. This is a natural assumption since the label is unchanged given the group action. Assume that the set $\XX$ is endowed with a measure $\rho_{\XX}$ that is also group-invariant.
Let $f$ be the group-invariant decision function and consider the expected risk induced by the loss $V$, $\mathcal{E}_{V}(f)$, defined as follows:
 \begin{equation}\label{eq:risk}
\mathcal{E}_{V}(f)=\int_{\XX} \sum_{y\in \YY}V(yf(x))\rho_{y}(x)\rho_{\XX}(x) dx,
\end{equation}
$\mathcal{E}_{V}(f)$  is a proxy to the misclassification risk \cite{bajomc06}.
Using the invariant properties of the function class and the data distribution we have by invariance of $f$, $\rho_y$, and $\rho$: 
\begin{eqnarray*}
\mathcal{E}_{V}(f)
&=&\int_{\XX_0} \sum_{y\in \YY}V(yf(x))\rho_{y}(x)\rho_{\XX}(x)dx + \int_{\mathcal{G}\XX_0} \sum_{y\in \YY}V(yf(z))\rho_{y}(z)\rho_{\XX}(z)dz\\
&=&\int_{G} d\mu(g)\int_{\XX_0} \sum_{y\in \YY} V(y f(gx))\rho_{y}(gx)\rho_{\XX}(x) dx\\
&=&\int_{G} d\mu(g)\int_{\XX_0} \sum_{y\in \YY} V(y f(x))\rho_{y}(x)\rho_{\XX}(x) dx \text{~~(By invariance of $f$, $\rho_y$, and $\rho$ )}\\
&=&\int_{\XX_0} \sum_{y\in \YY} V(y f(x))\rho_{y}(x)\rho_{\XX}(x) dx.
\end{eqnarray*}
Hence, given an invariant kernel to a group action that is identity preserving, it is sufficient to minimize the empirical risk on the core set $\XX_{0}$, and it generalizes to samples in $\mathcal{G}\XX_0$.\\
Let us imagine that $\XX$ is finite with cardinality $|\XX|$; the cardinality of the core set $\XX_{0}$ is a small fraction of the cardinality of $\XX$: $|\XX_0|=\alpha|\XX|,$ where $0<\alpha<1$. Hence, when we sample training points from $\XX_{0}$, the maximum size of the training set is $N=\alpha |\XX|<<|\XX|$, yielding a reduction in the sample complexity.

%
%
\subsection{Contributions}
We have just reviewed the group-invariant Haar-integration kernel. 
In summary, a group-invariant kernel implies the existence of a decision function that is invariant to the group action, as well as a reduction in the sample complexity due to sampling training points from a reduced set, a.k.a the core set $\XX_{0}$.\\
Kernel methods with Haar-integration kernels come at a very expensive computational price at both training and test time: computing the Kernel is computationally cumbersome as we have to integrate over the group and produce virtual examples by transforming points explicitly through the group action. Moreover, the training complexity of kernel methods scales cubicly in the sample size. Those practical considerations make the usefulness of such kernels very limited.\\
The contributions of this paper are on three folds:
\begin{enumerate}
\item We first show that a non-linear random feature map $\Phi:\XX \to \mathbb{R}^D$  derived from a memory-based theory of invariances introduced in \cite{AnselmiLRMTP13} induces an expected group-invariant Haar-integration kernel $K$. For fixed points $x,z\in \XX$, we have:
$\mathbb{E}\scalT{\Phi(x)}{\Phi(z)}= K(x,z),$
where $K$ satisfies: $K(gx,g'z)=K(x,z), \forall g,g' \in G, x,z \in \XX.$
\item  We show a Johnson-Lindenstrauss type result that holds uniformly on a set of $N$ points that assess the concentration of this random feature map around its expected induced kernel. For sufficiently large $D$, we have $\scalT{\Phi(x)}{\Phi(z)}\approx K(x,z)$, uniformly on an $N$ points set.
\item We show that, with a linear model, an invariant decision function can be learned in this random feature space by sampling points from the core set $\XX_0$ i.e:
$f^*_{N}(x)\approx \scalT{w^*}{\Phi(x)}$
and generalizes to unseen points in $\mathcal{G}\XX_0$, reducing the sample complexity. Moreover, we show that those features define a function space that approximates a dense subset of the invariant RKHS, and assess the error rates of the empirical risk minimization
using such random features.  
\item We demonstrate the validity of these claims on three datasets: text (artificial), vision (MNIST), and speech (TIDIGITS).
\end{enumerate}

\section{From Group Invariant Kernels to Feature Maps }
In this paper we show that a random feature map based on I-theory \cite{AnselmiLRMTP13}: $\Phi: \mathcal{X}\to \mathbb{R}^{D}$ approximates  a group-invariant Haar-integration kernel $K$ having the form given in Equation \eqref{eq:haarInv}: 
$$\scalT{\Phi(x)}{\Phi(z)}\approx K(x,z).$$  
We start with some notation that will be useful for defining the feature map.
Denote the cumulative distribution function of a random variable $X$ by,
$$F_{X}(\tau)=\mathbb{P}(X\leq \tau),$$
Fix $x\in\XX$, Let $g \in G$ be a random variable drawn according to the normalized Haar measure $\mu$ and let $t$ be a random template whose distribution will be defined later. For $s>0$, define the following truncated cumulative distribution function (CDF) of the dot product $\scalT{x}{gt}$:
$$\psi(x,t,\tau)= \mathbb{P}_{g}(\scalT{x}{gt}\leq \tau)=  F_{\scalT{x}{gt}}(\tau), ~\tau \in [-s,s] , x\in \XX,  $$
Let $\eps \in (0,1)$. We consider the following Gaussian vectors (sampling with rejection) for the templates $t$: 
$$t = n \sim \mathcal{N}\left(0, \frac{1}{d} I_d\right) , \text{ if } \nor{n}^2_2 < 1+\eps, ~t=\perp \text{ else }.$$
The reason behind this sampling is to keep the range of $\scalT{x}{gt}$ under control: The squared norm $\nor{n}^2_2 $ will be bounded by $1+\eps$ with high probability by a classical concentration result (See proof of Theorem \ref{theo:CDF_RP} for more details).  The group being unitary and $x \in \mathbb{S}^{d-1}$, we know that : $|\scalT{x}{gt}| \leq \nor{n}_2 < \sqrt{1+\eps} \leq 1+\eps $, for $\eps \in (0,1)$.
\begin{remark}
We can also consider templates $t$, drawn uniformly on the unit sphere $\mathbb{S}^{d-1}$. Uniform templates on the sphere can be drawn as follows:
$$t= \frac{\nu}{\nor{\nu}_2}, ~\nu \sim \mathcal{N}(0,I_d),$$
since the norm of a gaussian vector is highly concentrated around its mean $\sqrt{d}$, we can use the gaussian sampling with rejection. 
Results proved for gaussian templates (with rejection) will hold true for templates drawn at uniform on the sphere with different constants.
\end{remark}
\noindent Define the following kernel function,
\begin{eqnarray*}
K_{s}(x,z)&=& \mathbb{E}_{t} \int_{-s}^s \psi(x,t,\tau) \psi(z,t,\tau)d\tau,
\end{eqnarray*}
where $s$ will be fixed throughout the paper to be $s=1+\eps$ since the gaussian sampling with rejection controls the dot product to be in that range.\\
Let $\bar{g} \in G$. As the group is closed, we have $\psi(t,\bar{g}x,\tau)=\int_{G} \ind_{\scalT{g\bar{g}x}{t}\leq \tau } d\mu(g)=\int_{G} \ind_{\scalT{gx}{t}\leq \tau } d\mu(g)= \psi(t,x,\tau) $ and hence $K_{s}(gx,g'z)= K_{s}(x,z),$ for all $g,g'\in G$. It is clear now that $K$ is a group-invariant kernel.  \\
In order to approximate $K_{s}$, we sample $|G|$ elements uniformly and independently from the group $G$, i.e. $g_i, i=1\dots |G|$, and define the normalized empirical CDF :
$$ \phi (x,t,\tau)= \frac{1}{|G|\sqrt{ m }}\sum_{i=1}^{|G|} \ind_{\scalT{g_it}{x}\leq \tau } , ~-s\leq \tau\leq s. $$
We discretize the continuous threshold $\tau$ as follows:
$$\phi\left(x,t,\frac{sk}{n}\right)=\frac{\sqrt{s}}{\sqrt{n m }|G|}\sum_{i=1}^{|G|} \ind_{\scalT{g_it}{x}\leq \frac{s}{n}k}, ~-n\leq k \leq n.$$
We sample $m$ templates independently according to the Gaussian sampling with rejection, $t_j, j=1\dots m$.
We are now ready to define the random feature map $\Phi$:
$$\Phi(x)=\left[\phi\left(x,t_j,\frac{sk}{n}\right)\right ]_{j=1\dots m, k=-n\dots n}\in \mathbb{R}^{(2n+1)\times m}.$$
It is easy to see that: 
$$\lim _{n\to \infty} \mathbb{E}_{t,g} \scalT{\Phi(x)}{ \Phi(z)}_{\mathbb{R}^{(2n+1)\times m}}=\lim _{n\to \infty} \mathbb{E}_{t,g} \sum_{j=1}^m\sum_{k=-n}^{n}\phi\left(x,t_j,\frac{sk}{n}\right)\phi\left(z,t_j,\frac{sk}{n}\right)=K_{s}(x,z).$$
In Section \ref{sec:geometry} we study the geometric information captured by this kernel by stating explicitly the similarity it computes.
\begin{remark}[Efficiency of the representation]
1) The main advantage of such a feature map, as outlined in \cite{AnselmiLRMTP13}, is that we store transformed templates in order to compute $\Phi$, while if we wanted to compute an invariant kernel of type $\mathcal{K}$ (Equation \eqref{eq:haarInv}), we would need to explicitly transform the points. The latter is computationally expensive. Storing transformed templates and computing the signature $\Phi$ is much more efficient. It falls in the category of  memory-based learning, and is biologically plausible  \cite{AnselmiLRMTP13}. \\
2) As $|G|$,$m$,$n$ get large enough, the feature map $\Phi$ approximates a group-invariant Kernel, as we will see in next section.
\end{remark}

\section{An Equivalent Expected  Kernel and a Uniform Concentration Result}\label{sec:geometry}
In this section we present our main results, with proofs given in the supplementary material . Theorem \ref{theo:CDF_RP} shows that the random feature map $\Phi$, defined in the previous section, corresponds in expectation to a group-invariant Haar-integration kernel $K_{s}(x,z)$. Moreover, $s-K_{s}(x,z)$ computes the average pairwise distance between all points in the orbits of $x$ and $z$, where the orbit is defined as the collection of all group-transformations of a given point $x$ : $\mathcal{O}_{x}=\{gx,g \in G\}$.
\begin{theorem}[Expectation]\label{theo:CDF_RP} 
Let $\eps \in (0,1)$ and  $x,z \in \XX$. Define the distance $d_{G}$ between the orbits $\mathcal{O}_{x}$ and $\mathcal{O}_{z}$:  $$d_{G}(x,z)= \frac{1}{\sqrt{2\pi d}}\int_{G}\int_{G}  \nor{gx-g'z}_2 d\mu(g)d\mu(g'),$$
and the group-invariant expected kernel $$K_{s} (x,z)=\lim _{n\to \infty} \mathbb{E}_{t,g} \scalT{\Phi(x)}{ \Phi(z)}_{\mathbb{R}^{(2n+1)\times m}}=\mathbb{E}_{t} \int_{-s}^{s} \psi(x,t,\tau)\psi(z,t,\tau) d\tau, ~ s=1+\eps. $$
\begin{enumerate}
\item The following inequality holds with probability 1:
 \begin{equation} \label{eq:sandwich}
\eps - \delta_2(d,\eps) \leq K_{s}(x,z) - \left( 1-d_{G}(x,z)\right)\leq \eps+ \delta_{1}(d,\eps), 
 \end{equation}
 where $ \delta_{1}(\eps,d)=\frac{e^{- d \eps^2/16}}{\sqrt{d}} -\frac{1}{2}\frac{e^{-\eps d/2} \left(1+\eps\right)^{\frac{d}{2}}}{\sqrt{d}}$ and $\delta_2(\eps,\delta)=   \frac{e^{- d \eps^2/16}}{\sqrt{d}}+ (1+\eps)e^{-d\eps^2/8} $.
\item For any  $\eps \in (0,1)$  as  the dimension $d \to \infty $ we have   $\delta_{1}(\eps,d) \to 0 $ and $\delta_{2}(\eps,d) \to 0$, and we have asymptotically $K_{s}(x,z) \to 1- d_{G}(x,z) +\eps= s-d_{G}(x,z)$.
\item $K_{s}$ is symmetric and $K_{s}$  is positive semi-definite.

\end{enumerate}
\end{theorem}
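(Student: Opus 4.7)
The plan is to reduce $K_s(x,z)$ to an explicit formula involving the average orbit-to-orbit distance $\mathbb{E}_{g,g'}\nor{gx-g'z}$, and then quantify the perturbation caused by the Gaussian rejection step. The starting point is to unfold each CDF as an integral over the group, swap it with the $\tau$-integration via Fubini, and invoke the elementary identity
\[\int_{-s}^s \ind_{A\leq \tau}\ind_{B\leq \tau}\, d\tau \;=\; s - \max(A,B) \;=\; s - \tfrac{A+B}{2} - \tfrac{|A-B|}{2},\]
which holds whenever $A,B\in[-s,s]$. Because the rejection enforces $\nor{t}^2<1+\eps$, Cauchy--Schwarz gives $|\scalT{x}{gt}|,|\scalT{z}{g't}|<\sqrt{1+\eps}\leq s$ almost surely, so the identity is available with $A=\scalT{x}{gt}$ and $B=\scalT{z}{g't}$.

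Taking the expectation in $t$, the linear terms vanish because the rejected Gaussian is radially symmetric (so $\mathbb{E}_t[t]=0$); what remains is
\[K_s(x,z) \;=\; s\,\mathbb{P}(\nor{n}^2<1+\eps) \;-\; \tfrac{1}{2}\,\mathbb{E}_{g,g'}\,\mathbb{E}_n\!\left[\bigl|\scalT{x}{gn}-\scalT{z}{g'n}\bigr|\,\ind_{\nor{n}^2<1+\eps}\right].\]
By unitarity, the inner difference equals $\scalT{u}{n}$ with $u:=g^{-1}x-g'^{-1}z$; for the full Gaussian $n\sim\mathcal{N}(0,I_d/d)$ we have $\scalT{u}{n}\sim\mathcal{N}(0,\nor{u}^2/d)$, so $\mathbb{E}|\scalT{u}{n}|=\nor{u}\sqrt{2/(\pi d)}$. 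Haar invariance under $g\mapsto g^{-1}$ then identifies $\tfrac{1}{2}\mathbb{E}_{g,g'}\mathbb{E}_n|\scalT{u}{n}|$ with $d_G(x,z)$, so in the \emph{untruncated} regime we recover exactly $K_s(x,z)=s-d_G(x,z)=(1-d_G(x,z))+\eps$, matching the claimed main term.

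Two truncation perturbations then remain. First, the factor $\mathbb{P}(\nor{n}^2<1+\eps)$ in front of $s$ contributes a negative correction $-(1+\eps)\mathbb{P}(\nor{n}^2\geq 1+\eps)$; second, replacing the truncated expectation of $|\scalT{u}{n}|$ with the Gaussian one leaves a nonnegative remainder $\tfrac{1}{2}\mathbb{E}[|\scalT{u}{n}|\ind_{\nor{n}^2\geq 1+\eps}]$ that is bounded, by Cauchy--Schwarz with $\mathbb{E}\scalT{u}{n}^2=\nor{u}^2/d$ and $\nor{u}\leq 2$, by $\tfrac{1}{\sqrt d}\sqrt{\mathbb{P}(\nor{n}^2\geq 1+\eps)}$. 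Inserting the Laurent--Massart chi-squared bound $\mathbb{P}(\nor{n}^2\geq 1+\eps)\leq e^{-d\eps^2/8}$ produces the $e^{-d\eps^2/16}/\sqrt d$ constant appearing in both $\delta_1$ and $\delta_2$; the sharper $\tfrac{1}{2}e^{-d\eps/2}(1+\eps)^{d/2}/\sqrt d$ term in $\delta_1$ comes from the Chernoff/MGF tail $\mathbb{P}(\nor{n}^2\geq 1+\eps)\leq e^{-d\eps/2}(1+\eps)^{d/2}$ combined with a matching Chernoff lower bound on the same tail, which refines the upper bound; the $(1+\eps)e^{-d\eps^2/8}$ piece of $\delta_2$ comes from applying the crude Laurent--Massart upper bound to the $-(1+\eps)(1-p)$ contribution.

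Symmetry of $K_s$ is immediate by swapping $x\leftrightarrow z$. Positive semi-definiteness follows by writing $K_s(x,z)=\scalT{\psi(x,\cdot,\cdot)}{\psi(z,\cdot,\cdot)}_{L^2(d\mu_n\otimes d\tau)}$ as a Gram inner product, so $\sum_{i,j}\alpha_i\alpha_j K_s(x_i,x_j)=\nor{\sum_i\alpha_i\,\psi(x_i,\cdot,\cdot)}_{L^2}^2\geq 0$. The $d\to\infty$ asymptotics are automatic since $\delta_1$ and $\delta_2$ are sums of exponentials in $-d$ (using $e^{-\eps}(1+\eps)<1$ for $\eps>0$, so that $(e^{-\eps}(1+\eps))^{d/2}\to 0$). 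I expect the main obstacle to be the third step: correctly bookkeeping the two truncation perturbations so that the sharper MGF-based constants appear with the proper sign and coefficient in both $\delta_1$ and $\delta_2$, and ensuring the $\sqrt{1+\eps}$ versus $1+\eps$ slack produced by Cauchy--Schwarz does not erode the advertised bounds.
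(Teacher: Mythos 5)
Your proposal is correct and follows the same overall architecture as the paper's proof: Fubini plus the identity $\int_{-s}^{s}\ind_{A\leq\tau}\ind_{B\leq\tau}\,d\tau=s-\max(A,B)$, a decomposition into an untruncated main term plus truncation remainders, a Cauchy--Schwarz bound on the remainder combined with $\chi^2$ tail estimates, and the Gram-inner-product argument for symmetry and positive semi-definiteness. The one genuine divergence is how you evaluate the main term $\mathbb{E}_n\max\left(\scalT{g^{-1}x}{n},\scalT{g'^{-1}z}{n}\right)$: the paper invokes Clark's closed-form expressions for the mean and second moment of the maximum of two correlated Gaussians (Lemma \ref{lem:max}), whereas you write $\max(A,B)=\tfrac{A+B}{2}+\tfrac{|A-B|}{2}$, kill the linear part by the sign-symmetry of the rejected Gaussian, and reduce to the first absolute moment $\mathbb{E}|\scalT{u}{n}|=\nor{u}\sqrt{2/(\pi d)}$ of a one-dimensional Gaussian. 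This is more elementary and self-contained, recovers the same $\frac{1}{\sqrt{2\pi d}}\nor{g^{-1}x-g'^{-1}z}_2$ term, and as a bonus your tail remainder $\tfrac{1}{2}\mathbb{E}\left[|\scalT{u}{n}|\ind_{\nor{n}^2\geq 1+\eps}\right]$ is manifestly nonnegative, so your lower bound is actually slightly tighter than the paper's (which bounds $|I_2|$ in absolute value on both sides). One small imprecision: the refining term $-\tfrac{1}{2}e^{-\eps d/2}(1+\eps)^{d/2}/\sqrt{d}$ in $\delta_1$ requires a \emph{lower} bound on $\mathbb{P}(\nor{n}^2\geq 1+\eps)$, which the paper obtains from Inglot's quantile inequality (Lemma \ref{lem:tails}); your attribution to the Chernoff \emph{upper} tail bound is not quite right, though you do correctly flag that a matching lower bound on the tail is what is needed, so this is a citation issue rather than a logical gap.
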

\begin{remark}
1) $\eps,\delta_1(d,\eps),$ and $\delta_2(d,\eps)$ are not errors due to results holding with high probability but are due to the truncation and are a technical artifact of the proof. 
2) Local invariance can be defined by restricting the sampling of the group elements to a subset $\mathcal{G} \subset G$. Assuming that for each $g\in \mathcal{G}, g^{-1}\in \mathcal{G}$, the equivalent kernel has asymptotically the following form: 
 $$K_{s}(x,z)\approx s- \frac{1}{\sqrt{2\pi d}}\int_{\mathcal{G}}\int_{\mathcal{G}} \nor{gx-g'z}_2 d\mu(g)d\mu(g')  .$$
 3) The norm-one constraint can be relaxed, let $R= \sup_{x\in \XX} \nor{x}_{2}<\infty$, hence we can set $s=R(1+\eps)$, and \begin{equation} \label{eq:sandwich}
 - \delta_2(d,\eps) \leq K_{s}(x,z) - \left( R(1+\eps)-d_{G}(x,z)\right)\leq  \delta_{1}(d,\eps), 
 \end{equation}
 where $ \delta_{1}(\eps,d)=R\frac{e^{- d \eps^2/16}}{\sqrt{d}} -\frac{R}{2}\frac{e^{-\eps d/2} \left(1+\eps\right)^{\frac{d}{2}}}{\sqrt{d}}$ and $\delta_2(\eps,\delta)=   R\frac{e^{- d \eps^2/16}}{\sqrt{d}}+ R(1+\eps)e^{-d\eps^2/8} $.
\end{remark}
Theorem \ref{theo:InvJL} is, in a sense, an invariant Johnson-Lindenstrauss \cite{JL} type result where we show that the dot product defined by the random feature map $\Phi$
, i.e $\scalT{\Phi(x)}{\Phi(z)}$, is concentrated around the invariant expected kernel uniformly on a data set of $N$ points, given a sufficiently large number of templates $m$, a large number of sampled group elements $|G|$, and a  large bin number $n$. The error naturally decomposes  to a numerical error $\eps_0$ and  statistical errors $\eps_1,\eps_2$ due to the sampling of the templates and the group elements respectively.

\begin{theorem}\label{theo:InvJL}[Johnson-Lindenstrauss type Theorem- $N$ point Set] Let $\mathcal{D}=\{x_i\mid x_i\in \XX\}_{i=1}^N$ be a finite dataset. Fix $\eps_0,\eps_1,\eps_2,\delta_1,\delta_2 \in (0,1)$. For a number of bins 
$n\geq \frac{1}{\eps_0}$, templates $m\geq \frac{C_1 }{\eps^2_1}\log(\frac{N}{\delta_1})$, and group elements $|G|\geq\frac{C_2}{\eps^2_2}\log(\frac{Nm}{\delta_2})$, where $C_1,C_2$ are universal numeric constants, we have:
\begin{equation}\label{eq:JL}
\left|\scalT{\Phi(x_i)}{\Phi(x_j)}-K_{s}(x_i,x_j)\right| \leq  \eps_{0}+\eps_{1}+\eps_{2} , i=1\dots N, j=1\dots N ,
\end{equation}
with probability $1-\delta_1-\delta_2$.
\end{theorem}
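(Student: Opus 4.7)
The plan is to decompose the error $|\scalT{\Phi(x_i)}{\Phi(x_j)} - K_s(x_i,x_j)|$ via a three-term triangle inequality, with each term isolating one of the three errors $\eps_0$, $\eps_1$, $\eps_2$. Introduce the intermediate kernels
\begin{align*}
K_s^{(n)}(x,z) &:= \mathbb{E}_t \,\frac{s}{n}\sum_{k=-n}^{n} \psi(x,t,sk/n)\,\psi(z,t,sk/n), \\
\tilde K_s^{(n,m)}(x,z) &:= \frac{s}{nm}\sum_{j=1}^m \sum_{k=-n}^{n} \psi(x,t_j,sk/n)\,\psi(z,t_j,sk/n),
\end{align*}
and observe that, writing $\hat\psi(x,t,\tau) := \frac{1}{|G|}\sum_{i=1}^{|G|} \ind_{\scalT{g_i t}{x} \leq \tau}$ for the empirical CDF, the definition of $\Phi$ gives $\scalT{\Phi(x)}{\Phi(z)} = \frac{s}{nm}\sum_{j,k} \hat\psi(x,t_j,sk/n)\,\hat\psi(z,t_j,sk/n)$. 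The triangle inequality then splits the target into $|K_s - K_s^{(n)}|$ (discretization), $|K_s^{(n)} - \tilde K_s^{(n,m)}|$ (template sampling), and $|\tilde K_s^{(n,m)} - \scalT{\Phi(x)}{\Phi(z)}|$ (group sampling), each handled in turn.

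\textbf{Discretization.} The map $\tau \mapsto \psi(x,t,\tau)\psi(z,t,\tau)$ is non-decreasing and $[0,1]$-valued, so its Riemann sum over $2n+1$ equispaced nodes of width $s/n$ differs from $\int_{-s}^s$ by at most $2s/n$. Taking expectation over $t$ preserves the bound, and $n \geq 1/\eps_0$ suffices after absorbing $s \leq 2$ into the constant.

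\textbf{Template averaging.} For each fixed pair $(x,z)$, $\tilde K_s^{(n,m)}(x,z)$ is the empirical mean of $m$ i.i.d.\ bounded variables $Z_r := \frac{s}{n}\sum_k \psi(x,t_r,sk/n)\psi(z,t_r,sk/n) \in [0,3s]$ with population mean $K_s^{(n)}(x,z)$. Hoeffding's inequality and a union bound over the $N^2$ pairs in $\mathcal{D}\times\mathcal{D}$ yield uniform deviation $\leq \eps_1$ with failure probability $\delta_1$ provided $m \geq C_1/\eps_1^2 \log(N/\delta_1)$.

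The group term is the main subtlety. Using $|ab - a'b'| \leq |a-a'| + |b-b'|$ for $a,a',b,b' \in [0,1]$ and summing $\frac{s}{n}$ across the $2n+1$ bins reduces the task to controlling $\sup_{\tau \in \{sk/n\}} |\hat\psi(x,t_j,\tau) - \psi(x,t_j,\tau)|$ uniformly in $(x,t_j)$. A naive Hoeffding-plus-union-bound over thresholds would introduce an unwanted $\log n$ factor in the required $|G|$, which is absent from the statement; the remedy is the Dvoretzky--Kiefer--Wolfowitz inequality. Conditionally on $t_j$, the samples $\scalT{g_it_j}{x}$ are i.i.d.\ as $i$ varies, so DKW gives $\sup_{\tau \in \mathbb{R}} |\hat\psi(x,t_j,\tau) - \psi(x,t_j,\tau)| \leq \eps_2$ with probability $\geq 1 - 2 e^{-2|G|\eps_2^2}$, \emph{independently of $n$}. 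A union bound over the $Nm$ pairs $(x,t_j)$ then produces the stated threshold $|G| \geq C_2/\eps_2^2 \log(Nm/\delta_2)$ with failure probability $\leq \delta_2$, and the induced contribution to the total error is a constant multiple of $\eps_2$ (again absorbing $s \leq 2$). Combining the three pieces proves \eqref{eq:JL} with overall failure probability $\leq \delta_1 + \delta_2$.
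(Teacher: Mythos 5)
Your proposal is correct and follows essentially the same route as the paper's proof: a three-term triangle-inequality decomposition handled by a Riemann-sum bound for the monotone CDF product (binning error), Hoeffding plus a union bound over the $N^2$ pairs (template sampling), and the Dvoretzky--Kiefer--Wolfowitz inequality plus a union bound over the $Nm$ point--template pairs (group sampling), explicitly avoiding the $\log n$ factor just as the paper does. The only difference is the order of the telescoping --- you discretize the population kernel first and sample the group last, whereas the paper discretizes the empirical CDFs last --- which is immaterial to the bounds obtained.
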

Putting together Theorems \ref{theo:CDF_RP} and \ref{theo:InvJL}, the following Corollary shows how the group-invariant random feature map $\Phi$ captures 
the invariant distance between points uniformly on a dataset of $N$ points.
\begin{corollary} [Invariant Features Maps and Distances between Orbits]
Let $\mathcal{D}=\{x_i\mid x_i\in \XX\}_{i=1}^N$ be a finite dataset. Fix $\eps_0,\delta \in (0,1)$. For a number of bins 
$n\geq \frac{3}{\eps_0}$, templates $m\geq \frac{9C_1 }{\eps^2_0}\log(\frac{N}{\delta})$, and group elements $|G|\geq\frac{9C_2}{\eps^2_0}\log(\frac{Nm}{\delta})$, where $C_1,C_2$ are universal numeric constants, we have:
\begin{equation}\label{eq:JL}
\eps - \delta_2(d,\eps)-\eps_0 \leq \scalT{\Phi(x_i)}{\Phi(x_j)}- (1-d_{G}(x_i,x_j))\leq  \eps_{0}+ \eps+ \delta_{1}(d,\eps), 
\end{equation}
$ i=1\dots N, j=1\dots N$, with probability $1-2\delta$.

\end{corollary}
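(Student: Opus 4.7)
The plan is to combine Theorem \ref{theo:CDF_RP} and Theorem \ref{theo:InvJL} via the triangle inequality, with a specific three-way split of the error budget among the numerical, template-sampling, and group-sampling sources in Theorem \ref{theo:InvJL}.

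First I would write the elementary decomposition
\[
\scalT{\Phi(x_i)}{\Phi(x_j)} - (1 - d_{G}(x_i,x_j)) = \bigl[\scalT{\Phi(x_i)}{\Phi(x_j)} - K_{s}(x_i,x_j)\bigr] + \bigl[K_{s}(x_i,x_j) - (1 - d_{G}(x_i,x_j))\bigr].
\]
For the second bracket, Theorem \ref{theo:CDF_RP} (item 1) gives the deterministic two-sided control
\[
\eps - \delta_2(d,\eps) \leq K_{s}(x_i,x_j) - (1 - d_{G}(x_i,x_j)) \leq \eps + \delta_1(d,\eps),
\]
which already holds with probability one, uniformly for all pairs $i,j$.

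For the first bracket I would invoke Theorem \ref{theo:InvJL} with the symmetric allocation $\eps_0^{\mathrm{Thm \ref{theo:InvJL}}} = \eps_1 = \eps_2 = \eps_0/3$ and $\delta_1 = \delta_2 = \delta$. The three parameter requirements in Theorem \ref{theo:InvJL} become $n \geq 1/(\eps_0/3) = 3/\eps_0$, $m \geq C_1/(\eps_0/3)^2 \log(N/\delta) = 9 C_1/\eps_0^2\, \log(N/\delta)$, and $|G| \geq 9 C_2/\eps_0^2\, \log(Nm/\delta)$, which are exactly the hypotheses of the corollary. Theorem \ref{theo:InvJL} then yields
\[
\bigl|\scalT{\Phi(x_i)}{\Phi(x_j)} - K_{s}(x_i,x_j)\bigr| \leq \tfrac{\eps_0}{3} + \tfrac{\eps_0}{3} + \tfrac{\eps_0}{3} = \eps_0,
\]
uniformly for all pairs $i,j = 1,\ldots,N$, with probability at least $1 - \delta - \delta = 1 - 2\delta$.

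Adding the two bounds via the triangle inequality produces the upper bound $\eps_0 + \eps + \delta_1(d,\eps)$ and the lower bound $\eps - \delta_2(d,\eps) - \eps_0$ claimed in the corollary, both holding uniformly on the $N$-point dataset with probability $1 - 2\delta$. There is no real obstacle here: the corollary is essentially a packaging of the two main theorems, and the only point requiring care is the symmetric three-way split of $\eps_0$ among the error sources of Theorem \ref{theo:InvJL}, which is what produces the factor $9$ in the constants $9C_1$ and $9C_2$ and the factor $3$ in the bin requirement $n \geq 3/\eps_0$.
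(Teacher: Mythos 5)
Your proposal is correct and is exactly the argument the paper intends: the corollary is stated as a direct consequence of ``putting together'' Theorems \ref{theo:CDF_RP} and \ref{theo:InvJL}, and your symmetric split $\eps_0^{\mathrm{Thm}\,\ref{theo:InvJL}}=\eps_1=\eps_2=\eps_0/3$, $\delta_1=\delta_2=\delta$ is precisely what produces the stated constants $3/\eps_0$, $9C_1$, $9C_2$ and the confidence level $1-2\delta$.
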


\begin{remark}
 Assuming that the templates are unitary and drawn form a general distribution $p(t)$, the equivalent kernel has the following form: 
 $$K_{s}(x,z)= \int_{\mathcal{G}}\int_{\mathcal{G}}  d\mu(g)d\mu(g')  \left(\int s-\max(\scalT{x}{gt},\scalT{z}{g't}) p(t)dt\right).$$
Indeed when we use the gaussian sampling with rejection for the templates, the integral  $\int \max(\scalT{x}{gt},\scalT{z}{g't}) p(t)dt$ is asymptotically proportional to  $\nor{g^{-1}x-g^{',-1}z}_2$ .
It is interesting to consider different distributions that are domain-specific for the templates and assess the number of the templates needed to approximate such kernels. It is also interesting to find the optimal templates that achieve the minimum distortion in equation \ref{eq:JL}, in a data dependent way, but we will address these points in future work.
\end{remark}

\section{Learning with Group Invariant Random Features}
In this section, we show that learning a linear model in the invariant, random feature space, on a training set sampled from the reduced core set $\XX_{0}$, has a low expected risk, and generalizes to unseen test points generated from the distribution on $\XX=\XX_{0}\cup \mathcal{G}\XX_0$. The architecture of the proof follows ideas from \cite{RahimiR08} and \cite{Rah_Rec:2008:allerton}.
Recall that given an $L$-Lipschitz convex loss function $V$, our aim is to minimize the expected risk given in Equation \eqref{eq:risk}.
Denote the CDF by $\psi(x,t,\tau)= \mathbb{P}(\scalT{gt}{x}\leq \tau)$, and the empirical CDF by $\hat{\psi}(x,t,\tau)= \frac{1}{|G|}\sum_{i=1}^{|G|} \ind_{\scalT{g_it}{x}\leq \tau}$.
Let $p(t)$ be the distribution of templates $t$. The RKHS defined by the invariant kernel $K_s$, 
$K_{s}(x,z)=\int \int_{-s}^s \psi(x,t,\tau)\psi(z,t,\tau)p(t)dtd\tau$
 denoted $\mathcal{H}_{K_{s}}$ , is the completion of the set of all finite linear combinations of the form:
 \begin{equation}\label{eq:rkhs}
f(x)=\sum_{i} \alpha_iK_{s}(x,x_i),x_i \in \XX, \alpha_i \in \mathbb{R}.
\end{equation}
 Similarly to \cite{Rah_Rec:2008:allerton}, we define the following infinite-dimensional function space:
$$\mathcal{F}_{p}=\left\{ f(x)=\int \int_{-s}^s w(t,\tau)\psi(x,t,\tau)dt d\tau~ \mid \sup_{\tau,t} \frac{|w(t,\tau)|}{p(t)}\leq C  \right\}.$$
\begin{lemma}\label{lem:dense}
$\mathcal{F}_{p}$ is dense in $\mathcal{H}_{K_{s}}$. For $f \in \mathcal{F}_{p}$ we have
$\mathcal{E}_{V}(f)=\int_{\XX_0}\sum_{y\in \YY}V(yf(x)) \rho_{y}(x) d\rho_{\XX}(x),$ where $\XX_{0}$ is the reduced core set.
\end{lemma}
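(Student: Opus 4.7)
The plan is to establish the two claims separately: first that $\mathcal{F}_p \subset \mathcal{H}_{K_s}$ and is in fact rich enough to approximate every RKHS element, and second that every $f \in \mathcal{F}_p$ is automatically group-invariant, so the expected risk calculation from the introduction applies.

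For the density claim, my starting observation is that the integral representation of $K_s$ already matches the template used to define $\mathcal{F}_p$. Indeed, for any finite combination $f(x)=\sum_{i=1}^{n}\alpha_i K_s(x,x_i)$, I can interchange sum and integral to get
\[
f(x)=\int\int_{-s}^{s}\Bigl(p(t)\sum_{i=1}^{n}\alpha_i\,\psi(x_i,t,\tau)\Bigr)\psi(x,t,\tau)\,d\tau\,dt,
\]
so the weight $w(t,\tau)=p(t)\sum_i \alpha_i\psi(x_i,t,\tau)$ satisfies $|w(t,\tau)|/p(t)\le \sum_i|\alpha_i|$ because each $\psi(x_i,t,\tau)\in[0,1]$. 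Hence every such finite combination sits in $\mathcal{F}_p$ as soon as the defining constant $C$ is taken at least $\sum_i|\alpha_i|$. Since by construction $\mathcal{H}_{K_s}$ is the completion (in the RKHS norm) of the linear span of $\{K_s(\cdot,x_i):x_i\in\XX\}$, the union over $C>0$ of the classes $\mathcal{F}_p$ contains this entire dense span, yielding density of $\mathcal{F}_p$ in $\mathcal{H}_{K_s}$.

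For the risk reduction, the key is group invariance of every $f\in\mathcal{F}_p$. This comes from invariance of the CDF $\psi$ itself: for any $\bar g\in G$,
\[
\psi(\bar g x,t,\tau)=\int_{G}\ind_{\scalT{gt}{\bar g x}\le\tau}\,d\mu(g)=\int_{G}\ind_{\scalT{(\bar g^{-1}g)t}{x}\le\tau}\,d\mu(g)=\psi(x,t,\tau),
\]
using unitarity of $G$ and left-invariance of the Haar measure (the change of variables $g\mapsto \bar g^{-1}g$ preserves $\mu$). Integrating against $w(t,\tau)$ gives $f(\bar g x)=f(x)$ for all $\bar g\in G$. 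With $f$ group-invariant, $\rho_y$ group-invariant by assumption, and $\rho_\XX$ group-invariant by assumption, the chain of equalities already derived in Section~1 (splitting $\XX=\XX_0\cup\mathcal{G}\XX_0$, reparametrising the second integral through $z=gx$, and applying invariance pointwise) collapses $\mathcal{E}_V(f)$ onto $\XX_0$ and yields the claimed formula.

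The main obstacle I anticipate is being precise about the density statement, since $\mathcal{F}_p$ as written depends on the fixed bound $C$: one must either let $C$ grow or interpret $\mathcal{F}_p$ as the nested union, and then verify that the approximation is in the RKHS norm rather than some weaker topology. The invariance direction is largely bookkeeping once the CDF calculation above is in hand; the only subtlety is justifying the exchange of integration in the defining representation of $f$, which follows from the boundedness of $\psi$ and the $L^1$ bound on $w$ on the compact set $[-s,s]$ times a probability measure in $t$.
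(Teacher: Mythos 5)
Your proof is correct and follows essentially the same route as the paper: the density part is verbatim the paper's argument (rewrite $\sum_i\alpha_i K_s(\cdot,x_i)$ as an integral against the weight $w(t,\tau)=p(t)\sum_i\alpha_i\psi(x_i,t,\tau)$ and bound $|w|/p$ by $\sum_i|\alpha_i|$), and your remark about letting $C$ grow is a fair flag of a point the paper glosses over. The paper's proof of this lemma actually stops after the density claim, so your explicit verification that $\psi(\bar g x,t,\tau)=\psi(x,t,\tau)$ via Haar invariance, and hence that the Section~1 risk computation applies to every $f\in\mathcal{F}_p$, fills in the second half of the statement that the paper leaves implicit (it establishes the invariance of $\psi$ only in Section~2).
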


\noindent Since $\mathcal{F}_{p}$ is dense in $\mathcal{H}_{K_{s}}$, we can learn an invariant decision function in the space $\mathcal{F}_{p}$, instead of learning in $\mathcal{H}_{K_{s}}$.
\noindent Let ${\Psi}(x)=\left[\hat{\psi}\left(x,t_j,\frac{sk}{n}\right)\right]_{j=1\dots m,k=-n\dots n }.$
$\Psi$, and $\Phi$ are equivalent up to constants.
\noindent We will approximate the set $\mathcal{F}_{p}$ as follows:
$$\tilde{\mathcal{F}}=\left\{ f(x)=\scalT{w}{\Psi(x)}=\frac{s}{n}\sum_{j=1}^m  \sum_{k=-n}^n w_{j,k}\hat{\psi}\left(x,t_j,\frac{sk}{n}\right), t_j \sim p, j=1\dots m  ~ \mid \nor{w}_{\infty}\leq \frac{C}{m} \right\}. $$
Hence, we learn the invariant decision function via empirical risk minimization where we restrict the function to belong to $\tilde{\mathcal{F}}$, and the sampling in the training set is restricted to the core set $\XX_{0}$.
Note that with this function space we are regularizing for convenience the norm infinity of the weights but this can be relaxed in practice to a classical Tikhonov regularization. 
\begin{theorem}[Learning with Group invariant features]\label{theo:LearningInv}
Let $S=\{(x_i,y_i)\mid x_i \in \XX_{0}, y_i\in \YY,i=1\dots N \}$, a training set sampled from the core set $\XX_{0}$. Let
$f^*_{N}=\argmin_{f\in \mathcal{\tilde{F}}}\hat{\mathcal{E}}_{V}(f)=\frac{1}{N}\sum_{i=1}^N V(y_if(x_i)).\!$ Fix $\delta >0$, then
\begin{align*}
\mathcal{E}_{V}(f^*_{N})&\leq \min_{f\in \mathcal{F}_{p}}\mathcal{E}_{V}(f)+  2 \frac{1}{\sqrt{N}}\left(4LsC+2V(0)+LC\sqrt{\frac{1}{2}\log \left(\frac{1}{\delta}\right)}\right)\\
&+\frac{2sLC}{\sqrt{m}}\left(1+\sqrt{2\log\left(\frac{1}{\delta}\right)}\right)+L\left(\frac{2sC}{\sqrt{|G|}}\left( 1+\sqrt{2\log\left(\frac{m}{\delta} \right)}\right)+\frac{2sC}{n} \right),
\end{align*}
with probability at least $1-3\delta$ on the training set and the choice of templates and group elements.
\end{theorem}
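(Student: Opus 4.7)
The plan is to follow the Rahimi--Recht template \cite{Rah_Rec:2008:allerton}, decomposing the excess risk into a generalisation term over the random finite--dimensional class $\tilde{\mathcal{F}}$ and an approximation term quantifying how well $\tilde{\mathcal{F}}$ captures $\mathcal{F}_p$. Fix $f^{\star}\in\argmin_{f\in\mathcal{F}_p}\mathcal{E}_V(f)$ and a random $\tilde f\in\tilde{\mathcal{F}}$ targeting $f^{\star}$ (constructed below). Using $\hat{\mathcal{E}}_V(f^{*}_N)\le\hat{\mathcal{E}}_V(\tilde f)$,
\[
\mathcal{E}_V(f^{*}_N)-\mathcal{E}_V(f^{\star})
\le \bigl(\mathcal{E}_V(f^{*}_N)-\hat{\mathcal{E}}_V(f^{*}_N)\bigr)
+\bigl(\hat{\mathcal{E}}_V(\tilde f)-\mathcal{E}_V(\tilde f)\bigr)
+\bigl(\mathcal{E}_V(\tilde f)-\mathcal{E}_V(f^{\star})\bigr).
\]
The crucial use of Lemma~\ref{lem:dense} is that every $f\in\mathcal{F}_p$ is invariant, so $\mathcal{E}_V(f)=\int_{\XX_0}\sum_y V(yf)\,\rho_y\,d\rho_\XX$, meaning that training samples drawn from the core set $\XX_0$ give an unbiased estimate of $\mathcal{E}_V$.

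For the first two summands I would apply a uniform concentration bound over $\tilde{\mathcal{F}}$. Each $f\in\tilde{\mathcal{F}}$ is bounded by roughly $4sC$ because $\hat\psi\in[0,1]$, there are $(2n+1)m$ coordinates scaled by $s/n$, and $\|w\|_\infty\le C/m$; hence $V(y\,f)$ is $L$--Lipschitz in $f$ with range $[0,V(0)+4LsC]$. A McDiarmid step together with a Talagrand contraction applied to the Rademacher complexity of a linear $\ell_\infty$--bounded class delivers, with probability at least $1-\delta$,
\[
\sup_{f\in\tilde{\mathcal{F}}}\bigl|\hat{\mathcal{E}}_V(f)-\mathcal{E}_V(f)\bigr|
\;\le\;\tfrac{1}{\sqrt N}\Bigl(4LsC+2V(0)+LC\sqrt{\tfrac12\log(1/\delta)}\Bigr),
\]
which accounts for the $2/\sqrt N$ factor in the theorem (the $2$ arising from applying the bound at both $f^{*}_N$ and $\tilde f$).

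The approximation term is where I would spend most of the effort, constructing $\tilde f$ in three stages mirroring the definition of $\tilde{\mathcal{F}}$. Starting from $f^{\star}(x)=\iint w(t,\tau)\psi(x,t,\tau)\,dt\,d\tau$ with $|w/p|\le C$: Stage (a) draws $t_1,\dots,t_m\sim p$ and sets $\tilde f_m(x)=\tfrac1m\sum_j\int_{-s}^{s}\tfrac{w(t_j,\tau)}{p(t_j)}\psi(x,t_j,\tau)\,d\tau$, whose summands are uniformly bounded by $2sC$, so McDiarmid applied to the risk contributes $\tfrac{2sLC}{\sqrt m}(1+\sqrt{2\log(1/\delta)})$. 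Stage (b) substitutes the empirical CDF $\hat\psi$ for $\psi$, using a uniform (in $\tau$) bound $\sup_\tau|\psi-\hat\psi|=O(1/\sqrt{|G|})$ together with a union bound over the $m$ templates, producing $L\cdot\tfrac{2sC}{\sqrt{|G|}}(1+\sqrt{2\log(m/\delta)})$. Stage (c) discretises $\tau$ on the $(2n+1)$--point grid, with Riemann error $O(sC/n)$ times the Lipschitz constant $L$. Renormalising the weights as $w_{j,k}=\tfrac{s}{n}\tfrac{w(t_j,sk/n)}{m\,p(t_j)}$ and invoking $|w/p|\le C$ guarantees $\|w\|_\infty\le C/m$, so $\tilde f\in\tilde{\mathcal{F}}$; a final union bound at level $3\delta$ assembles the stated inequality.

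The hardest part I expect will be Stage (b): controlling $\sup_\tau|\psi(x,t,\tau)-\hat\psi(x,t,\tau)|$ uniformly in $\tau$ and simultaneously over all $m$ random templates, while keeping the normalised weights inside the $\|w\|_\infty\le C/m$ ball that \emph{defines} $\tilde{\mathcal{F}}$. The definition of $\mathcal{F}_p$ via $\sup_{t,\tau}|w(t,\tau)|/p(t)\le C$ is precisely what makes these two requirements compatible, and handling small $p(t_j)$ without inflating the Monte--Carlo variance in Stage (a) is the main technical subtlety. A secondary nuisance is tracking exact constants in the generalisation step, which hinges on expressing $\tilde{\mathcal{F}}$ as an $\ell_\infty$--bounded linear class acting on $[0,1]$--valued features and invoking the standard contraction inequality with constant $L$.
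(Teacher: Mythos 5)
Your proposal follows essentially the same route as the paper's proof: the same decomposition into a statistical term (bounded by $2\sup_{f\in\tilde{\mathcal{F}}}|\mathcal{E}_V(f)-\hat{\mathcal{E}}_V(f)|$ via the Rademacher complexity of the $\ell_\infty$-constrained linear class, controlled with H\"older's inequality) and a three-stage approximation term (template Monte Carlo, empirical group CDF, Riemann discretization of $\tau$), with the density lemma justifying that sampling from the core set $\XX_0$ gives an unbiased risk estimate. The only substantive deviation is your stage (b), where you invoke the DKW-type bound $\sup_\tau|\psi-\hat\psi|=O(1/\sqrt{|G|})$, which holds for a \emph{fixed} $x$, whereas the risk comparison requires controlling $\nor{\hat f_j-f_j}_{\mathcal{L}^2(\XX,\rho_\XX)}$ with high probability over the group sample; the paper instead applies its Hilbert-space concentration lemma to the average of the functions $x\mapsto\ind_{\scalT{x}{g_it_j}\leq\tau}$ viewed as elements of $\mathcal{L}^2(\XX,\rho_\XX)$, which delivers that bound directly, while your route would need an extra step (e.g.\ bounded differences applied to $\int_\XX\sup_\tau|\hat\psi-\psi|\,d\rho_\XX$) to pass from pointwise-in-$x$ to $\mathcal{L}^2(\rho_\XX)$ control.
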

The proof of Theorem \ref{theo:LearningInv} is given in Appendix  B. 
Theorem \ref{theo:LearningInv} shows that learning a linear model in the invariant random feature space defined by $\Phi$ (or equivalently $\Psi$), has a low expected risk. More importantly, this risk is arbitrarily close to the optimal risk achieved in an infinite-dimensional class of functions, namely $\mathcal{F}_{p}$. The  training set is sampled from the reduced core set $\XX_{0}$, and invariant learning generalizes to unseen test points generated from the distribution on $\XX=\XX_{0}\cup \mathcal{G}\XX_0$, hence the reduction in the sample complexity. Recall that $\mathcal{F}_{p}$ is dense in the RKHS of the Haar-integration invariant Kernel, and so the expected risk achieved by a linear model in the invariant random feature space is not far from the one attainable in the invariant RKHS. Note that the error decomposes into two terms.  The first, $O(\frac{1}{\sqrt{N}})$, is statistical and it depends on the training sample complexity $N$. The other is governed by the approximation error of  functions $\mathcal{F}_{p}$, with functions in $\tilde{\mathcal{F}}$, and depends on the number of templates $m$, number of  group elements sampled $|G|$, the number of bins $n$, and has the following form $O(\frac{1}{\sqrt{m}})+O\left(\sqrt{\frac{\log m}{|G|}}\right)+\frac{1}{n}$.\\
\section{Relation to Previous Work}\label{sec:prevwork}
We now put our contributions in perspective by outlining some of the previous work on invariant kernels and approximating kernels with random features. \\
\textbf{Approximating Kernels.} Several  schemes have been proposed for approximating a non-linear kernel with an explicit non-linear feature map in conjunction with linear methods, such as the Nystr\"{o}m method \cite{Williams01usingthe} or random sampling techniques in the Fourier domain for translation-invariant kernels \cite{RahimiR08}. Our features fall under the random sampling techniques where, unlike previous work, we sample both projections and group elements to induce invariance with an integral representation. 
We note that the relation between random features and quadrature rules has been thoroughly studied in \cite{Bach15}, where sharper bounds and error rates are derived, and can apply to our setting.\\
\textbf{Invariant Kernels.} We focused in this paper on Haar-integration kernels {\cite{HaasdonkVB05}}, since they have an integral representation and hence can be represented with random features \cite{Bach15}. Other invariant kernels have been proposed: In \cite{WalderC07} authors introduce transformation invariant kernels, but unlike our general setting, the analysis is concerned with dilation invariance. In \cite{ChoS09}, multilayer arccosine kernels are built by composing kernels that have an integral representation, but does not explicitly induce invariance. More closely related to our work is \cite{BoRF10}, where kernel descriptors are built for visual recognition by  introducing a kernel view of histogram of gradients that corresponds in our case to the cumulative distribution on the group variable. Explicit feature maps are obtained via kernel PCA, while our features are obtained via random sampling. Finally the convolutional kernel network of \cite{MairalKHS14} builds a sequence of multilayer  kernels that have an integral representation, by convolution, considering spatial neighborhoods in an image. Our future work will consider the composition of Haar-integration kernels, where the convolution is applied not only to the spatial variable but to the group variable akin to \cite{scat}.

\section{Numerical Evaluation}

In this paper, and specifically in Theorems 2 and 3, we showed that the random, group-invariant feature map $\Phi$ captures the invariant distance between points, and that learning a linear model trained in the invariant, random feature space will generalize well to unseen test points. In this section, we validate these claims through three experiments. For the claims of Theorem 2, we will use a nearest neighbor classifier, while for Theorem 3, we will rely on the regularized least squares (RLS) classifier, one of the simplest algorithms for supervised learning. While our proofs focus on norm-infinity regularization, RLS corresponds to Tikhonov regularization with square loss. Specifically, for performing $T-$way classification on a batch of $N$ training points in $\mathbb R^d$, summarized in the data matrix $X\in \mathbb R^{N\times d}$ and label matrix $Y\in \mathbb R^{N\times T}$, RLS will perform the optimization, $\min_{W\in \mathbb R^{m\times T}}\left\{\frac{1}{N}||Y - \Phi(X)W||_F^2 + \lambda ||W||_F^2\right\}$, where $||\cdot||_F$ is the Frobenius norm, $\lambda$ is the regularization parameter, and $\Phi$ is the feature map, which for the representation described in this paper will be a CDF pooling of the data projected onto group-transformed random templates. All RLS experiments in this paper were completed with the GURLS toolbox \cite{gurls}. The three datasets we explore are:\\
$\mathbf{X_{perm}}$ (Figure \ref{fig:fig1}): An artificial dataset consisting of all sequences of length 5 whose elements come from an alphabet of 8 characters. We want to learn a function which assigns a positive value to any sequence that contains a target set of characters (in our case, two of them) regardless of their position. Thus, the function label is globally invariant to permutation, and so we project our data onto all permuted versions of our random template sequences.\\
\textbf{MNIST} (Figure \ref{fig:fig2}): We seek local invariance to translation and rotation, and so all random templates are translated by up to 3 pixels in all directions and rotated between -20 and 20 degrees. \\
\textbf{TIDIGITS} (Figure \ref{fig:fig3}): We use a subset of TIDIGITS consisting of 326 speakers (men, women, children) reading the digits 0-9 in isolation, and so each datapoint is a waveform of a single word. We seek local invariance to pitch and speaking rate \cite{speechvariability}, and so all random templates are pitch shifted up and down by 400 cents and warped to play at half and double speed. The task is 10-way classification with one class-per-digit. See \cite{voinea} for more detail.\\

\begin{figure*}
\centerline{\epsfig{figure= 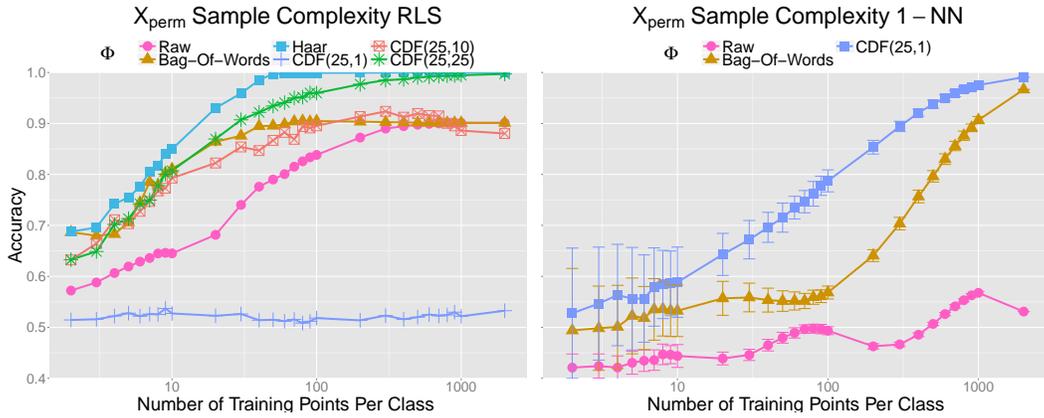, width=140mm, height=55mm}}
\caption{{\it Classification accuracy as a function of training set size, averaged over 100 random training samples at each size. $\Phi = \text{CDF}(n,m)$ refers to a random feature map with $n$ bins and $m$ templates. With 25 templates, the random feature map outperforms the raw features and a bag-of-words representation (also invariant to permutation) and even approaches an RLS classifier with a Haar-integration kernel. Error bars were removed from the RLS plot for clarity. See supplement. }}
\label{fig:fig1}
\end{figure*}

\begin{figure*}
\centerline{\epsfig{figure= 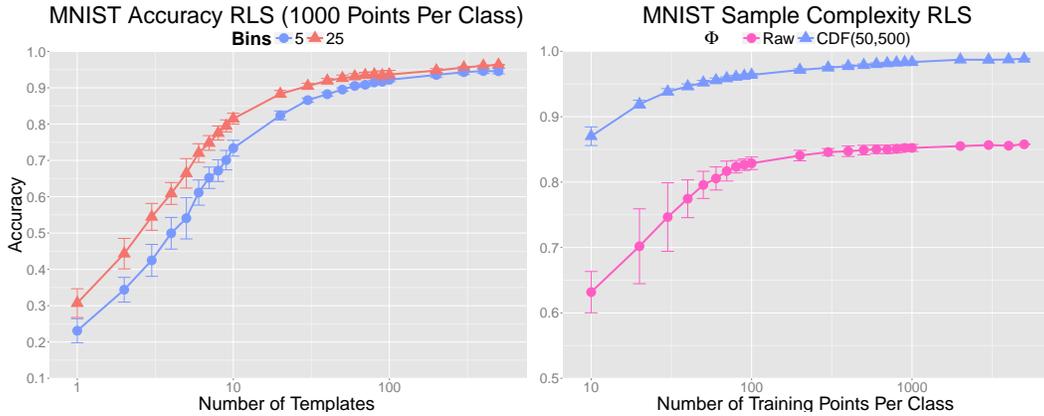, width=140mm, height=55mm}}
\caption{{\it Left Plot) Mean classification accuracy as a function of number of bins and templates, averaged over 30 random sets of templates. Right Plot) Classification accuracy as a function of training set size, averaged over 100 random samples of the training set at each size. At 1000 examples per class, we achieve an accuracy of 98.97\%.}}
\label{fig:fig2}
\end{figure*}

\begin{figure*}
\centerline{\epsfig{figure= 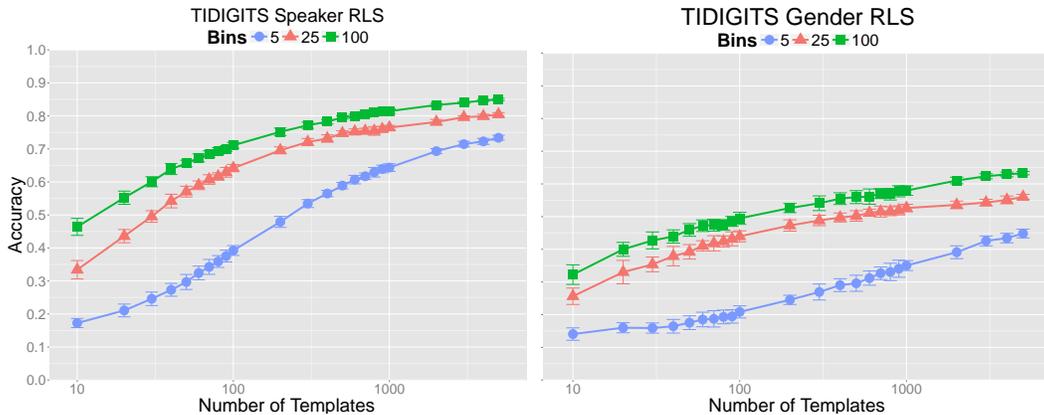, width=140mm, height=55mm}}
\caption{{\it Mean classification accuracy as a function of number of bins and templates, averaged over 30 random sets of templates. In the ``Speaker'' dataset, we test on unseen speakers, and in the ``Gender'' dataset, we test on a new gender, giving us an extreme train/test mismatch. \cite{speechvariability}.}}
\label{fig:fig3}
\end{figure*}
\textbf{Acknowledgements:} Stephen Voinea acknowledges the support of a Nuance Foundation Grant. This work was also supported in part by the Center for Brains, Minds and Machines (CBMM), funded by NSF STC award  CCF Ð 1231216.

\appendix
\section{Proofs of Theorems \ref{theo:CDF_RP} and \ref{theo:InvJL}}\label{sec:proofs}
\begin{proof}[Proof of Theorem \ref{theo:CDF_RP}]
1)
\begin{eqnarray*}
K_{s}(x,z)&=& \mathbb{E}_{t} \int_{-s}^s  \mathbb{E}_{g} \left[\ind_{\scalT{x}{gt}\leq \tau}\right] \mathbb{E}_{g'}\left[ \ind_{\scalT{z}{g't}\leq \tau}\right]  d\tau\\
&=& \mathbb{E}_{t} \int d\mu(g) d\mu(g') \int_{-s}^s \ind_{\scalT{x}{gt}\leq \tau}\ind_{\scalT{x}{g't}\leq \tau}d\tau\\
&= & \int d\mu(g) d\mu(g')  \mathbb{E}_{t} \left(s- \max (\scalT{x}{gt}, \scalT{z}{g't})\right).
\end{eqnarray*}
where the second equality is by Fubini theorem and the last one holds since for $a, b \in [-s,s]$ :
$$\int_{-s}^s \ind_{a\leq \tau}\ind_{b\leq \tau} d\tau= s-\max(a,b).$$
Recall that the sampling of $t$ is the following for  $\eps \in (0,1)$ let : 
$$t = n \sim \mathcal{N}\left(0, \frac{1}{d} I_d\right) , \text{ if } \nor{n}^2_2 < 1+\eps, t=\perp \text{ else },$$
since our group is unitary, $x$ being norm one, and by virtue of this sampling the dot product $\left|\scalT{x}{gt}\right| \leq \nor{n}_{2}\leq \sqrt{1+\eps} \leq 1+\eps$ . Hence $\scalT{x}{gt} \in [-(1+\eps),1+\eps]$, and  we can choose $s=1+\eps$.
Using again the fact the group is unitary and compact we have:
$$K_{s}(x,z)=  \int d\mu(g) d\mu(g')  \mathbb{E}_{t} (s- \max \left(\scalT{g^{-1}x}{t}, \scalT{g^{',-1}z}{t}\right).$$
 Now using  this particular sampling of templates we have:
 $$K_{s}(x,z)=  \int_{G}\int_{G} d\mu(g)d\mu(g') \mathbb{E}_{n}\left( \ind_{\nor{n}^2_2 < 1+\eps}\left[1+\eps- \max \left( \scalT{g^{-1}x}{n},\scalT{g'^{-1}z}{n}\right)\right]\right).$$
 Let $$Z_{x,z} (n,g,g')=  \max \left( \scalT{g^{-1}x}{n},\scalT{g'^{-1}z}{n}\right),$$
It follows that:
\begin{align}\label{eq:K}
K_{s}(x,z)&=  \int_{G}\int_{G} d\mu(g)d\mu(g') \mathbb{E}_{n}\left( \ind_{\nor{n}^2_2 < 1+\eps}\left[1+\eps -Z_{x,z}(n,g,g') \right]\right) \nonumber\\
&=(1+\eps) \mathbb{P}(\nor{n}^2_2 < 1+\eps) - \int_{G}\int_{G} d\mu(g)d\mu(g') \mathbb{E}_{n}\left( \ind_{\nor{n}^2_2 < 1+\eps} Z_{x,z}(n,g,g')\right)\nonumber\\
&= (1+\eps)\mathbb{P}(\nor{n}^2_2 < 1+\eps) - \int_{G}\int_{G} d\mu(g)d\mu(g') \mathbb{E}_{n} \left((1 - \ind_{\nor{n}^2_2\geq 1+\eps})Z_{x,z}(n,g,g')  \right)\nonumber\\
&= (1+\eps)\mathbb{P}(\nor{n}^2_2 < 1+\eps) - \int_{G}\int_{G} d\mu(g)d\mu(g') \mathbb{E}_{n}Z_{x,z}(n,g,g')\nonumber \\
&+ \int_{G}\int_{G} d\mu(g)d\mu(g') \mathbb{E}_{n}\left( \ind_{\nor{n}^2_2\geq1+\eps}Z_{x,z}(n,g,g')\right)
\end{align}

We are left with evaluating or bounding two expectations: 
$ I_{1}= \mathbb{E}_{n}Z_{x,z}(n,g,g')$, and $I_{2}=\mathbb{E}_{n}\left( \ind_{\nor{n}^2_2 \geq 1+\eps}Z_{x,z}(n,g,g')\right),$
that involve the maximum of correlated gaussian variables as we will see in the following.
 
By rotation invariance of Gaussians we have that $\scalT{g^{-1}x}{n}$, and $\scalT{g'^{-1}z}{n}$ are two correlated random gaussian variables with correllation coefficient that we note by $\cos(\theta_{g,g'})= \scalT{g^{-1}x}{g^{,-1}z}$. Hence by a change of a basis we can write: 
$$ \scalT{g^{-1}x}{n}= \frac{1}{\sqrt{d}}u,~  \scalT{g'^{-1}z}{n}=\frac{1}{\sqrt{d}}\cos (\theta_{g,g'})u +\frac{1}{\sqrt{d}}\sqrt{1-\cos^2(\theta_{g,g'}) }v $$
where $\cos(\theta_{g,g'})= \scalT{g^{-1}x}{g'^{-1}z}$, and $u , v \sim \mathcal{N}(0,1)$ iids.

Hence, $$I_{1}=\frac{1}{\sqrt{d}} \mathbb{E}_{u,v} \max\left(u, \cos (\theta_{g,g'})u+\sqrt{1-\cos^2(\theta_{g,g'}) }v  \right).$$

The following Lemma  from  \cite{Clark1961} gives the expectation and the variance of the maximum of two gaussians with correllation coefficient $\rho$.
\begin{lemma} [Mean and Variance of Maximum of Correlated Gaussians \cite{Clark1961} ]\label{lem:max}
Let $X \sim \mathcal{N}(\mu_{X}, \sigma^2_{X})$ and $Y \sim \mathcal{N}(\mu_{Y}, \sigma^2_{Y})$, two correlated gaussians with correllation coefficient $\rho$.
Define $\phi_{\mathcal{N}}(x)= \frac{1}{\sqrt{2\pi}}\exp(-x^2/2) $, and $\Phi_{\mathcal{N}}(y)=\int_{-\infty}^y \phi_{\mathcal{N}}(x) dx $. 
Let $a= \sqrt{\sigma^2_{X}+\sigma^2_{Y}- 2\rho \sigma_{X}\sigma_{Y}}$, and $\alpha =\frac{\mu_{X}-\mu_{Y}}{a}$.\\
The mean $\mu_{Z}$  and variance $\sigma^2_{Z}$ of $Z=\max(X,Y)$ are expressed analytically as follows: 
\begin{align}
\mu_{Z}&= \mu_{X} \Phi_{\mathcal{N}}(\alpha) + \mu_{Y} \Phi_{\mathcal{N}}(-\alpha)+ a \phi_{\mathcal{N}}(\alpha).  \\
\sigma^2_{Z}&= \underbrace{\left(\sigma^2_{X}+ \mu^2_{X}\right) \Phi_{\mathcal{N}}(\alpha) + \left(\sigma^2_{Y}+ \mu^2_{Y}\right)\Phi_{\mathcal{N}}(-\alpha)+\left( \mu_{X}+\mu_{Y}\right)  a \phi_{\mathcal{N}}(\alpha)}_{\mathbb{E}Z^2}- \mu^2_{Z}.
\end{align}
\end{lemma}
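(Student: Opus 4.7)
The plan is to derive both moments of $Z=\max(X,Y)$ by introducing the auxiliary Gaussian $U=X-Y$. Since $U\sim\mathcal{N}(\mu_X-\mu_Y,a^2)$, we have $\mathbb{P}(U\geq 0)=\Phi_{\mathcal{N}}(\alpha)$ and $\mathbb{P}(U<0)=\Phi_{\mathcal{N}}(-\alpha)$. Writing $Z=X\,\mathbf{1}_{U\geq 0}+Y\,\mathbf{1}_{U<0}$ and $Z^2=X^2\,\mathbf{1}_{U\geq 0}+Y^2\,\mathbf{1}_{U<0}$ reduces the lemma to computing $\mathbb{E}[X^k\mathbf{1}_{U\geq 0}]$ and $\mathbb{E}[Y^k\mathbf{1}_{U<0}]$ for $k=1,2$.

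Each such expectation is evaluated by conditioning on $U$. The pair $(X,U)$ is jointly Gaussian with $\mathrm{Cov}(X,U)=\sigma_X^2-\rho\sigma_X\sigma_Y$, so $\mathbb{E}[X\mid U=u]=\mu_X+\frac{\sigma_X^2-\rho\sigma_X\sigma_Y}{a^2}(u-\mu_U)$ and the conditional variance is a constant. Via the change of variables $v=(u-\mu_U)/a$, the elementary identities $\int_{-\alpha}^\infty\phi_{\mathcal{N}}(v)\,dv=\Phi_{\mathcal{N}}(\alpha)$ and $\int_{-\alpha}^\infty v\phi_{\mathcal{N}}(v)\,dv=\phi_{\mathcal{N}}(\alpha)$ give $\mathbb{E}[X\mathbf{1}_{U\geq 0}]=\mu_X\Phi_{\mathcal{N}}(\alpha)+\frac{\sigma_X^2-\rho\sigma_X\sigma_Y}{a}\phi_{\mathcal{N}}(\alpha)$, and a symmetric computation on $(Y,U)$ yields $\mathbb{E}[Y\mathbf{1}_{U<0}]=\mu_Y\Phi_{\mathcal{N}}(-\alpha)+\frac{\sigma_Y^2-\rho\sigma_X\sigma_Y}{a}\phi_{\mathcal{N}}(\alpha)$. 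Adding the two, the coefficient of $\phi_{\mathcal{N}}(\alpha)$ collapses to $(\sigma_X^2+\sigma_Y^2-2\rho\sigma_X\sigma_Y)/a=a$, recovering the stated $\mu_Z$.

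For the second moment I use $\mathbb{E}[X^2\mid U=u]=(\mathbb{E}[X\mid U=u])^2+\mathrm{Var}(X\mid U)$ and expand. The only additional truncated-Gaussian integral needed is $\int_{-\alpha}^\infty v^2\phi_{\mathcal{N}}(v)\,dv=\Phi_{\mathcal{N}}(\alpha)-\alpha\phi_{\mathcal{N}}(\alpha)$, which follows from $v^2\phi_{\mathcal{N}}(v)=\phi_{\mathcal{N}}(v)+\phi_{\mathcal{N}}''(v)$ and the fundamental theorem of calculus. After squaring the conditional mean, the quadratic regression term $((\sigma_X^2-\rho\sigma_X\sigma_Y)/a)^2$ combines with the conditional variance to reassemble $\sigma_X^2$, leaving $(\sigma_X^2+\mu_X^2)\Phi_{\mathcal{N}}(\alpha)$ after integration; the $Y$ side is symmetric. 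The remaining pieces proportional to $\alpha\phi_{\mathcal{N}}(\alpha)$ regroup into $(\mu_X+\mu_Y)\,a\,\phi_{\mathcal{N}}(\alpha)$, matching $\mathbb{E}[Z^2]$, and subtracting $\mu_Z^2$ produces $\sigma_Z^2$.

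The main obstacle is purely algebraic bookkeeping: tracking the cross-terms produced by squaring the conditional mean and verifying that all $\alpha\phi_{\mathcal{N}}(\alpha)$ contributions telescope cleanly into $(\mu_X+\mu_Y)a\phi_{\mathcal{N}}(\alpha)$, with the regression-variance correction conspiring to give back the marginal variances $\sigma_X^2,\sigma_Y^2$. Since the joint-Gaussian conditioning formulas and the three Hermite-style integrals are standard, nothing in the argument requires an inspired step once the decomposition through $U$ is adopted; the challenge is simply to keep signs and powers of $a$ straight through the expansion.
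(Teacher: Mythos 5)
Your derivation is correct. The paper itself does not prove this lemma; it is imported verbatim from Clark (1961), whose original derivation manipulates the bivariate normal density directly (essentially differentiating under the integral and using the identity $\partial \Phi_2/\partial \rho = \phi_2$ for the bivariate normal CDF). Your route through the auxiliary variable $U=X-Y$, with the decomposition $Z = X\,\mathbf{1}_{U\ge 0}+Y\,\mathbf{1}_{U<0}$ and the Gaussian regression formula $\mathbb{E}[X\mid U]$, is a genuinely different and arguably more transparent argument: it reduces everything to the three one-dimensional truncated-moment integrals $\int_{-\alpha}^{\infty} v^k\phi_{\mathcal{N}}(v)\,dv$ for $k=0,1,2$, all of which you identify correctly. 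I checked the bookkeeping you flag as the main risk: with $b=(\sigma_X^2-\rho\sigma_X\sigma_Y)/a$ and $c=(\rho\sigma_X\sigma_Y-\sigma_Y^2)/a$ one has $b-c=a$ and $b^2-c^2=\sigma_X^2-\sigma_Y^2$, so the $\phi_{\mathcal{N}}(\alpha)$ coefficient in $\mathbb{E}Z^2$ is $2\mu_Xb-2\mu_Yc-\alpha(b^2-c^2)=(\mu_X+\mu_Y)(b-c)=(\mu_X+\mu_Y)a$, exactly as claimed; likewise $b^2\Phi_{\mathcal{N}}(\alpha)$ from the $v^2$ integral recombines with the conditional variance $(\sigma_X^2-b^2)\Phi_{\mathcal{N}}(\alpha)$ to give $\sigma_X^2\Phi_{\mathcal{N}}(\alpha)$. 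Two cosmetic caveats: the argument implicitly assumes $(X,Y)$ jointly Gaussian (needed for the linear regression formula, and clearly the intended reading) and $a>0$ so that $\alpha$ and the conditioning are well defined; the degenerate case $a=0$ is excluded by the lemma's own notation.
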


Applying Lemma \ref{lem:max} to our case $(\mu_{X}=\mu_{Y}=0, \sigma_{X}=\sigma_{Y}=1, \rho = \cos(\theta_{g,g'}))$. We have:
$a= \sqrt{2 (1-\cos(\theta_{g,g'}))}$ and $\alpha=0$.
\begin{eqnarray}\label{eq:I1}
I_{1} &=&\frac{1}{\sqrt{d}} a \phi_{\mathcal{N}}(0) \nonumber \\
&=& \frac{1}{\sqrt{2\pi d}}\sqrt{2 (1-\cos(\theta_{g,g'}))}  \nonumber \\
&=&  \frac{1}{\sqrt{2\pi d}} \nor{g^{-1}x-g'^{-1}z}_2 .
\end{eqnarray}
We turn now to $I_2$ that we bound using Cauchy-Schwarz inequality:
\begin{eqnarray} \label{eq:I2}
\left| I_{2}\right|&=&\left|\mathbb{E}_{n}\left( \ind_{\nor{n}^2_2 \geq 1+\eps}Z_{x,z}(n,g,g')\right) \right|  \nonumber\\
&\leq& \sqrt{E (\ind_{\nor{n}^2_2 \geq 1+\eps})} \sqrt{E(Z^2_{x,z}(n,g,g'))} \nonumber \\
&=& \sqrt{ \mathbb{P}\left(\nor{n}^2_2\geq 1+\eps \right)}\sqrt{E(Z^2_{x,z}(n,g,g'))}.
\end{eqnarray}
%
On the first hand, applying again Lemma \ref{lem:max} (for $\mathbb{E} Z^2$) we have:
\begin{eqnarray}\label{eq:Second}
E(Z^2_{x,z}(n,g,g') &= &\frac{1}{d}\mathbb{E}_{u,v}\left( \max\left(u, \cos (\theta_{g,g'})u+\sqrt{1-\cos^2(\theta_{g,g'}) }v  \right)\right)^2 \nonumber\\
&=& \frac{1}{d} \left(2 \Phi_{\mathcal{N}}(0)\right) \nonumber\\
&=& \frac{1}{d}.
\end{eqnarray}
On the other hand, note that $\nor{n}^2_{2}$ has  a (normalized) chi squared distribution with $d$ degree of freedom $\chi^2_{d}$ , with mean $1$ .
 The following Lemma gives upper  bounds for  the upper and lower tails of a chi square distribution.
\begin{lemma}[$\chi^2$ tail bounds]\label{lem:tails}
Let $X\sim \chi^2_{k}$, a chi squared random variable with $k$ degree of freedom. The following  hold true  for any $\eps \in (0,1)$:
\begin{itemize}
\item  Upper Bound for the upper tail \cite{VershyninReview}: $\mathbb{P}\left( \frac{1}{k} X \geq 1+\eps \right) \leq e^{-k \eps^2/8} $.
\item  Upper Bound for the lower tail \cite{Chisquare}:  For all $k \geq 2$, $u\geq k-1$ we have:
$$\mathbb{P }\left(X < u\right) \leq 1-\frac{1}{2} \exp\left( -\frac{1}{2} \left( u-k - (k-2)\log(u/k)+\log(k)\right)\right).$$
More specifically for $u=k(1+\eps)$ we have:
$$\mathbb{P }\left(\frac{1}{k} X < 1+\eps\right) \leq 1-\frac{1}{2}\frac{e^{-\eps k/2} \left(1+\eps\right)^{\frac{k-2}{2}}}{\sqrt{k}}.$$

\end{itemize}
\end{lemma}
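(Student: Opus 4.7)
The plan is to handle the two bounds separately: the upper tail by a standard Chernoff/MGF argument, and the lower tail by a direct integral lower bound on $\mathbb{P}(X\geq u)$ combined with Stirling's formula.

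For the upper tail, I would start from the moment generating function $\mathbb{E}[e^{tX}] = (1-2t)^{-k/2}$ (valid for $t<1/2$), and apply Markov's inequality to $e^{tX}$:
\[
\mathbb{P}(X \geq k(1+\eps)) \;\leq\; e^{-tk(1+\eps)}(1-2t)^{-k/2}.
\]
The minimizer is $t^\star = \eps/(2(1+\eps))$, for which $1-2t^\star = 1/(1+\eps)$ and $t^\star k(1+\eps) = k\eps/2$, giving
\[
\mathbb{P}(X \geq k(1+\eps)) \;\leq\; (1+\eps)^{k/2}e^{-k\eps/2} \;=\; \exp\!\Bigl(\tfrac{k}{2}\,h(\eps)\Bigr),\qquad h(\eps) := \log(1+\eps)-\eps.
\]
Writing $-h(\eps) = \int_0^\eps s/(1+s)\,ds$ and using $1/(1+s) \geq 1/2$ for $s\in[0,1]$, I obtain $h(\eps) \leq -\eps^2/4$, which yields the stated bound $e^{-k\eps^2/8}$.

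For the lower tail, the approach is to first establish the pointwise inequality $\mathbb{P}(X\geq u) \geq 2 f_k(u)$, where $f_k(x) = x^{k/2-1}e^{-x/2}/(2^{k/2}\Gamma(k/2))$. Using the substitution $z = x - u$,
\[
\mathbb{P}(X\geq u) \;=\; f_k(u)\int_0^\infty (1+z/u)^{k/2-1}e^{-z/2}\,dz \;\geq\; 2f_k(u),
\]
where the inequality uses $k\geq 2$ so that $(1+z/u)^{k/2-1}\geq 1$ for all $z\geq 0$. Next, I plug in the density and invoke Robbins' upper bound $\Gamma(z) \leq \sqrt{2\pi/z}(z/e)^z e^{1/(12z)}$ at $z = k/2$. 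After straightforward algebra, the desired inequality
\[
2f_k(u) \;\geq\; \tfrac{1}{2}\exp\!\bigl(-\tfrac{1}{2}\bigl(u-k-(k-2)\log(u/k)+\log k\bigr)\bigr)
\]
reduces to $\sqrt{\pi}\,e^{1/(6k)} \leq 2$, which holds for every $k\geq 2$. Taking complements yields the general lower-tail bound, and specializing to $u = k(1+\eps)$ (so $u-k = k\eps$ and $\log(u/k) = \log(1+\eps)$) produces the closed-form bound in the second bullet.

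The main obstacle is the Stirling bookkeeping in the lower tail: one must track the constants carefully so that the factors $\tfrac{1}{2}$ and $\sqrt{k}$ appear in exactly the stated places, and verify that the residual Stirling-correction inequality $\sqrt{\pi}\,e^{1/(6k)}\leq 2$ indeed kicks in at $k = 2$ rather than requiring larger $k$. A secondary subtlety is that the argument actually yields the lower-tail bound for all $u>0$ (not merely $u\geq k-1$), but the stated hypothesis is harmless and may be retained for consistency with the cited source.
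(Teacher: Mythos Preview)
Your proof is correct. The paper itself does not prove this lemma: it simply states the two bounds and cites \cite{VershyninReview} for the upper tail and \cite{Chisquare} for the lower tail, treating them as black-box technical tools. Your Chernoff argument for the upper tail is the standard one and matches what one would find in \cite{VershyninReview}; your lower-tail argument via the pointwise density bound $\mathbb{P}(X\geq u)\geq 2f_k(u)$ plus Robbins' Stirling estimate is a clean, self-contained derivation that reproduces the constants exactly, including the residual check $\sqrt{\pi}\,e^{1/(6k)}\leq 2$ at $k=2$ (numerically $\approx 1.93$). Your observation that the lower-tail argument in fact works for all $u>0$ (not only $u\geq k-1$) is also correct and worth noting; the restriction in the statement is inherited from the cited source rather than required by your proof.
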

Applying Lemma \ref{lem:tails}, for $\nor{n}^2_{2}$. We have $\nor{n}^2_{2} =\frac{1}{d} X$, where $X \sim \chi^2_{d}$, hence:
\begin{equation} \label{eq:normG}
\mathbb{P}\left(\nor{n}^2_2 \geq 1+\eps \right) \leq e^{- d \eps^2/8} ,
\end{equation}

Putting together Equations \eqref{eq:I2},\eqref{eq:normG}, \eqref{eq:Second} we have finally: 
\begin{equation}\label{eq:absI2}
\left| I_2 \right| \leq \frac{e^{- d \eps^2/16}}{\sqrt{d}}.
\end{equation}
Putting together Equations \eqref{eq:K}, \eqref{eq:I1}, and \eqref{eq:absI2}, and using upper and lower bounds for $\mathbb{P}(\nor{n}^2_2 < 1+\eps)$ from Lemma \ref{lem:tails}:
\begin{eqnarray*}
 K_{s}(x,z) &\leq& \left(1+\eps \right)\mathbb{P}(\nor{n}^2_2 < 1+\eps) - \frac{1}{\sqrt{2\pi d}}\int_{G}\int_{G}\nor{g^{-1}x-g'^{-1}z}_2  d\mu(g)d\mu(g') +  \frac{e^{- d \eps^2/16}}{\sqrt{d}}\\
 & \leq& \left(1+\eps \right)\left(1-\frac{1}{2}\frac{e^{-\eps d/2} \left(1+\eps\right)^{\frac{d-2}{2}}}{\sqrt{d}}\right) - \frac{1}{\sqrt{2\pi d}}\int_{G}\int_{G} \nor{g^{-1}x-g'^{-1}z}_2 d\mu(g)d\mu(g') \\
 &+&  \frac{e^{- d \eps^2/16}}{\sqrt{d}}.  \\
 K_{s}(x,z) & \geq&(1+\eps)  \mathbb{P}(\nor{n}^2_2 < 1+\eps) - \frac{1}{\sqrt{2\pi d}}\int_{G}\int_{G}  \nor{g^{-1}x-g'^{-1}z}_2d\mu(g)d\mu(g') - \frac{e^{- d \eps^2/16}}{\sqrt{d}}\\
 &\geq& (1+\eps)\left( 1- e^{-d\eps^2/8}\right)- \frac{1}{\sqrt{2\pi d}}\int_{G}\int_{G}  \nor{g^{-1}x-g'^{-1}z}_2d\mu(g)d\mu(g') - \frac{e^{- d \eps^2/16}}{\sqrt{d}}.
\end{eqnarray*}
Noting by  $d_{G}$ the integral and using that the group is compact and unitary:
\begin{eqnarray*}
d_{G}(x,z)&=& \frac{1}{\sqrt{2\pi d}}\int_{G}\int_{G}\nor{g^{-1}x-g'^{-1}z}_2  d\mu(g)d\mu(g') \\
&=& \frac{1}{\sqrt{2\pi d}}\int_{G}\int_{G}  \nor{gx-g'z}_2 d\mu(g)d\mu(g').
\end{eqnarray*}
We finally have:
\begin{equation}
 -  \frac{e^{- d \eps^2/16}}{\sqrt{d}}- (1+\eps)e^{-d\eps^2/8}+\eps\leq K_{s}(x,z) - \left( 1-d_{G}(x,z)\right)\leq   \frac{e^{- d \eps^2/16}}{\sqrt{d}} -\frac{1}{2}\frac{e^{-\eps d/2} \left(1+\eps\right)^{\frac{d}{2}}}{\sqrt{d}}+\eps .
\end{equation}
For any $\eps \in (0,1)$ , as the dimension $d \to \infty$, we have asymptotically: 
$$K_{s}(x,z) \to 1- d_{G}(x,z)+\eps = s-d_{G}(x,z).$$

2) The symmetry of $K$ is obvious. Let $p(t)$ be the distribution of the templates $t$. 
Define the following weighted dot product: $\scalT{f(x,.,.)}{g(z,.,.)}= \int_{t} p(t) \int_{-s}^s d\tau f(x,t,\tau)g(z,t,\tau) $.
Recall that:
\begin{eqnarray*}
K_{s}(x,z)&=& \int p(t) dt  \int_{-s}^s  \psi(x,t,\tau )\psi(z,t,\tau)  d\tau \\
&=& \scalT{\psi (x,.,.) }{\psi(z,.,.)}. 
\end{eqnarray*}
Hence $K$ is symmetric and positive semidefinite.


\end{proof}

\begin{proof}  [Proof of Theorem \ref{theo:InvJL}]
In the following we fix two points  $x$ and $z$ in $\mathcal{X}$ and a random template $t$.
Let $X_j=\int_{-s}^s \mathbb{P}(\scalT{gt_j}{x}\leq \tau)\mathbb{P}(\scalT{gt_j}{z}\leq \tau)d\tau$, we have $0\leq X_j\leq 2s$, where $s=1+\eps$. 
Recall that $K_s(x,z)=\frac{1}{m} \mathbb{E}_{t}(\sum_{j=1}^m X_j)$.
By Hoeffding's inequality we have:
$$\mathbb{P}_{t}\left\{\left|\frac{1}{m} \sum_{j=1}^m X_j-K_s(x,z)\right|> \epsilon\right\} \leq 2 \exp\left(\frac{-2m\epsilon^2}{(2s)^2}\right)$$
 Turning now to the CDF $\psi(x,t,\tau)= \mathbb{P}(\scalT{gt}{x}\leq \tau)$, and the empirical CDF $\hat{\psi}(x,t,\tau)= \frac{1}{|G|}\sum_{i=1}^{|G|} \ind_{\scalT{g_it}{x}\leq \tau}$.
 By the theorem on convergence of the empirical CDF \cite{dvoretzky1956} (Theorem \ref{theo:empCDF} given in Appendix \ref{app:TT} ) we have, for $\gamma>0$:
$$\mathbb{P}_{g}\left\{\sup _{\tau }\left| \hat{\psi}(x,t,\tau)-\psi(x,t,\tau) \right|> \gamma \right\} \leq 2\exp (-2|G|\gamma^2) $$
Hence we have $\forall \tau \in [-s,s]$:
$$\left|\hat{\psi}(x,t,\tau)- \psi(x,t,\tau)\right|\leq  \gamma \text{ and } \left|\hat{\psi}(x,t,\tau)-\psi(z,t,\tau)\right|\leq  \gamma $$
with a probability at least $1-4\exp(-2|G|\gamma^2)$.\\
Define $X= \int_{-s}^s \psi(x,t,\tau)\psi(z,t,\tau) d\tau$,  $\hat{X}=\int_{-s}^s\hat{\psi}(x,t,\tau)\hat{\psi}(z,t,\tau)d\tau$, and $\tilde{X}=\frac{(2s)}{n} \sum_{k=-n}^n\hat{\psi}(x,t,\frac{ks}{n})\hat{\psi}(z,t,\frac{ks}{n}) $, choose $0<\gamma<1$:
\begin{eqnarray*}
|\hat{X}-X|&=&\left|\int_{-s}^s \left(\hat{\psi}(x,t,\tau)\hat{\psi}(z,t,\tau)-{\psi}(x,t,\tau){\psi}(z,t,\tau) \right) d\tau\right|\\
&=&\left|\!\int_{-s}^s\!\left(\hat{\psi}(x,t,\tau)-\psi(x,t,\tau)+ \psi(x,t,\tau)\right)\left(\hat{\psi}(z,t,\tau)-\psi(z,t,\tau)+ \psi(z,t,\tau)\right)-\psi(x,t,\tau)\psi(z,t,\tau)d\tau\right|\\
&\leq& (2\gamma+\gamma^2)2s\\
&\leq& 6s\gamma,
\end{eqnarray*}
with probability $1-4\exp(-2|G|\gamma^2)$.
Define $X_j= \int_{-s}^s \psi(x,t_j,\tau)\psi(z,t_j,\tau) d\tau$,  $\hat{X}_j=\int_{-s}^s\hat{\psi}(x,t_j,\tau)\hat{\psi}(z,t_j,\tau)d\tau$, and $\tilde{X}_j=\frac{(2s)}{n} \sum_{k=-n}^n\hat{\psi}(x,t_j,\frac{ks}{n})\hat{\psi}(z,t_j,\frac{ks}{n}) $,
Then for all $j=1\dots m $, we have $$|\hat{X}_j-X_j|\leq  6s\gamma$$ with probability $1-4m\exp(-2|G|\gamma^2)- 2 \exp\left(\frac{-2m\epsilon^2}{(2s)^2}\right).$

\noindent Now we turn to the numerical approximation of the integra by a Riemann sum, we have for all $j=1\dots m$ :
$$\left| \hat{X}_j-\tilde{X_j} \right|\leq \frac{s}{n}.$$
Hence the error decomposes in the following way:
\begin{align*}
\left|\scalT{\Phi(x)}{\Phi(z)}-K_s(x,z)\right|&=\left|\frac{1}{m}\sum_{j=1}^m \tilde{X}_j - K_s(x,z)\right|\\
&=\left|\left(\frac{1}{m}\sum_{j=1}^m \tilde{X}_j - \frac{1}{m}\sum_{j=1}^m \hat{X}_j\right) +\left( \frac{1}{m}\sum_{j=1}^m \hat{X}_j - \frac{1}{m}\sum_{j=1}^m {X}_j \right)+\left(\frac{1}{m}\sum_{j=1}^m {X}_j - K_s(x,z)\right)  \right| \\
&\leq \underbrace{\left|\frac{1}{m}\sum_{j=1}^m \tilde{X}_j - \frac{1}{m}\sum_{j=1}^m \hat{X}_j\right|}_{\text{Numerical Binning Error}}+\underbrace{\left| \frac{1}{m}\sum_{j=1}^m \hat{X}_j - \frac{1}{m}\sum_{j=1}^m {X}_j\right|}_{\text{Group CDF Approximation Error }}+\underbrace{\left| \frac{1}{m}\sum_{j=1}^m {X}_j - K_s(x,z)\right|}_{\text{Templates Concentration Error}}\\
&\leq \frac{s}{n}+ 6s\gamma+\epsilon.
\end{align*}
with probability $1-4m\exp(-2|G|\gamma^2)- 2 \exp\left(\frac{-2m\epsilon^2}{(2s)^2}\right).$
For this to hold on all pairs of points in a set of cardinality $N$ we have:
$$\left|\scalT{\Phi(x_i)}{\Phi(x_j)}-K(x_i,x_j)\right| \leq  \frac{s}{n}+ 6s\gamma+\epsilon, i=1\dots N, j=1\dots N ,$$
with probability $1-4m N(N-1)\exp(-2|G|\gamma^2)- 2N(N-1) \exp\left(\frac{-m\epsilon^2}{2(s)^2}\right).$\\
Hence we have for numerical constants $C_1$, and $C_2$, $0<\delta_1,\delta_2<1$, and $0<\eps_0,\eps_1,\eps_2<1$, for $n\geq \frac{s}{\eps_0}$, $m\geq \frac{C_1 }{\eps^2_1}\log(\frac{N}{\delta_1})$,$|G|\geq\frac{C_2}{\eps^2_2}\log(\frac{Nm}{\delta_2})$,
:
$$\left|\scalT{\Phi(x_i)}{\Phi(x_j)}-K_{s}(x_i,x_j)\right| \leq  \eps_{0}+\eps_{1}+\eps_{2} , i=1\dots N, j=1\dots N ,$$
with probability $1-\delta_1-\delta_2$.

\end{proof}

\section{Proof of Theorem \ref{theo:LearningInv}}\label{sec:theo3}
\begin{proof}[Proof of Lemma \ref{lem:dense}]
Our proof parallels similar proofs in \cite{Rah_Rec:2008:allerton}. Note that functions of the form \eqref{eq:rkhs} are dense in $\mathcal{H}_{K}$.
$f(x)=\sum_{i}\alpha_iK_s(x,x_i)= \sum_{i}\alpha_i \int\int_{-s}^s\psi(x,t,\tau)\psi(x_i,t,\tau)p(t) dt d\tau$\\
$=\int\int_{-s}^s\left(p(t)\sum_{i}\alpha_i \psi(x_i,t,\tau)\right)\psi(x,t,\tau)dt d\tau.$
Let $\beta(t,\tau)=p(t)\sum_{i}\alpha_i \psi(x_i,t,\tau)$, since $0\leq \psi(x,t,\tau)\leq 1$,
$\forall x,t,\tau$, we have $\frac{|\beta(t,\tau)|}{p(t)}\leq \sum_{i}|\alpha_i|<\infty$, since $\alpha_i$ are finite. Hence $f$ can be written in the form: 
$$f(x)=\int \int_{-s}^s \beta(t,\tau)\psi(x,t,\tau)dtd\tau,~ \sup_{\tau,t}\frac{|\beta(t,\tau)|}{p(t)}<\infty,$$
and $f\in \mathcal{F}_{p}.$
\end{proof}

\noindent In order to prove Theorem \ref{theo:LearningInv}, we need some preliminary lemmas.
\noindent The following Lemma assess the approximation  of any function $f \in \mathcal{F}_{p}$, by a certain  $\tilde{f} \in \tilde{\mathcal{F}}$.\\  

\begin{lemma}[$\tilde{\mathcal{F}}$ Approximation of $\mathcal{F}_{p}$]\label{lem:approx}
Let $f$ be a function in $\mathcal{F}_{p}$. Then for $\delta_1,\delta_2>0$, there exists a function $\tilde{f} \in \tilde {\mathcal{F}}$ such that:
$$\nor{\tilde{f} -f}_{\mathcal{L}^2(\XX,\rho_{\XX})}\leq  \frac{2sC}{\sqrt{m}}\left(1+\sqrt{2\log\left(\frac{1}{\delta_1}\right)}\right)+\frac{2sC}{\sqrt{|G|}}\left( 1+\sqrt{2\log\left(\frac{m}{\delta_2} \right)}\right)+\frac{2sC}{n} ,$$
with probability at least $1-\delta_1-\delta_2$.
\end{lemma}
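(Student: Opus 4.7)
The plan is to exhibit $\tilde f\in\tilde{\mathcal{F}}$ as a three-stage approximation of $f$: first replace the integral $\int p(t)(\cdot)\,dt$ by an empirical average over $m$ i.i.d.\ templates $t_j\sim p$; then replace the true CDF $\psi(\cdot,t_j,\tau)$ by its empirical counterpart $\hat\psi(\cdot,t_j,\tau)$ built from $|G|$ i.i.d.\ group samples; finally replace $\int_{-s}^s d\tau$ by its Riemann sum on the grid $\{sk/n:-n\le k\le n\}$. The resulting $\tilde f(x)=\frac{s}{n}\sum_{j,k}w_{j,k}\hat\psi(x,t_j,sk/n)$ with coefficients $w_{j,k}:=w(t_j,sk/n)/(m\,p(t_j))$ satisfies $|w_{j,k}|\le C/m$ by the constraint $\sup_{\tau,t}|w|/p\le C$ defining $\mathcal F_p$, so $\tilde f\in\tilde{\mathcal F}$. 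The three stages contribute the three summands of the claimed bound, assembled by the triangle inequality.

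For the template Monte Carlo stage, set $h_j(x)=\int_{-s}^s \frac{w(t_j,\tau)}{p(t_j)}\psi(x,t_j,\tau)\,d\tau$ and $f_{\text{MC}}(x)=\frac1m\sum_j h_j(x)$, so that $\mathbb E_{t_j\sim p}[h_j]=f$ and $|h_j(x)|\le 2sC$ pointwise. Viewing the $h_j$ as i.i.d.\ elements of $L^2(\XX,\rho_\XX)$ with $\|h_j\|_{L^2}\le 2sC$, a Jensen/Hilbert-space Hoeffding bound gives $\mathbb E\|f_{\text{MC}}-f\|_{L^2}\le 2sC/\sqrt m$, and McDiarmid's bounded-differences inequality (each $h_j$ changes the $L^2$-norm by at most $4sC/m$) upgrades this to $\|f_{\text{MC}}-f\|_{L^2}\le \tfrac{2sC}{\sqrt m}\bigl(1+\sqrt{2\log(1/\delta_1)}\bigr)$ with probability at least $1-\delta_1$.

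For the empirical-CDF stage, let $\hat f_{\text{MC}}(x)=\frac1m\sum_j \int_{-s}^s\frac{w(t_j,\tau)}{p(t_j)}\hat\psi(x,t_j,\tau)\,d\tau$. Applying the Dvoretzky--Kiefer--Wolfowitz inequality (Theorem of empirical CDF convergence already invoked in the proof of Theorem~\ref{theo:InvJL}) for each $t_j$ and union-bounding over the $m$ templates yields, with probability $1-\delta_2$, $\sup_\tau|\hat\psi(x,t_j,\tau)-\psi(x,t_j,\tau)|\le \sqrt{\log(2m/\delta_2)/(2|G|)}$ for every $j$. Using $|w/p|\le C$ in the $\tau$-integral gives the pointwise (hence $L^2$) bound $\|\hat f_{\text{MC}}-f_{\text{MC}}\|\le \tfrac{2sC}{\sqrt{|G|}}\bigl(1+\sqrt{2\log(m/\delta_2)}\bigr)$. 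The Riemann-sum stage then replaces $\int_{-s}^s$ by $\tfrac{s}{n}\sum_{k=-n}^n$; since the integrand $\frac{w(t_j,\tau)}{p(t_j)}\hat\psi(x,t_j,\tau)$ is bounded by $C$ and $\hat\psi$ has total variation $1$ in $\tau$, the standard step-size bound yields a deterministic discretization error of at most $2sC/n$, uniformly in $x$. A final triangle inequality combines the three bounds and the failure probabilities into $1-\delta_1-\delta_2$.

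The main obstacle is the Hilbert-space concentration in the first stage: one must verify that the $h_j$ genuinely satisfy the bounded-differences hypothesis $\|\,\|\frac1m\sum h_i\|_{L^2}-\|\frac1m\sum h_i'\|_{L^2}\,\|\le 4sC/m$ when a single $h_j$ is swapped, so that McDiarmid gives exactly the factor $(1+\sqrt{2\log(1/\delta_1)})$ in the statement rather than something dimension-dependent. This is a standard manoeuvre borrowed from the Rahimi--Recht Allerton argument cited in the paper; once in place, the DKW step and the Riemann bound are routine, and plugging everything into the triangle inequality recovers the lemma exactly as stated.
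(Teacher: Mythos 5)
Your proposal is correct and follows the same three-stage decomposition as the paper's proof (template Monte Carlo error, empirical-CDF error, Riemann-sum error, assembled by the triangle inequality), with the same choice of coefficients $w_{j,k}=w(t_j,sk/n)/(m\,p(t_j))$ placing $\tilde f$ in $\tilde{\mathcal F}$. The one genuine divergence is the second stage. The paper controls $\nor{\hat\psi(\cdot,t_j,\tau)-\psi(\cdot,t_j,\tau)}_{\mathcal{L}^2(\XX,\rho_\XX)}$ by applying the Hilbert-space concentration lemma (Lemma \ref{lem:ConcH}) to the indicators $\ind_{\scalT{\cdot}{g_it_j}\leq\tau}$ viewed as elements of $\mathcal{L}^2(\XX,\rho_\XX)$, for each fixed $\tau$ and $j$, and then takes a supremum over the continuous threshold $\tau$ --- a step that a union bound over $j$ alone does not fully justify. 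You instead invoke the Dvoretzky--Kiefer--Wolfowitz inequality, which delivers the bound uniformly over $\tau$ for free (pointwise in $x$, which is stronger than the $\mathcal{L}^2$ statement needed), union-bounding only over the $m$ templates; a short computation confirms that the resulting constant $2sC\sqrt{\log(2m/\delta_2)/(2|G|)}$ is dominated by the stated $\frac{2sC}{\sqrt{|G|}}\bigl(1+\sqrt{2\log(m/\delta_2)}\bigr)$. So your route buys a cleaner handling of the uniformity in $\tau$ at no cost in the constants. Your first stage simply reproves Lemma \ref{lem:ConcH} via McDiarmid rather than citing it, which is fine. One caveat you share with the paper: the $2sC/n$ Riemann-sum bound implicitly assumes some regularity of $\tau\mapsto w(t_j,\tau)/p(t_j)$ beyond mere boundedness (your total-variation remark covers $\hat\psi$ but not the weight), but this is a gap inherited from the original argument, not one you introduced.
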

\begin{proof}[Proof of Lemma \ref{lem:approx}] Let $f \in \mathcal{F}_{p}, f(x)=\int \int_{-s}^s w(t,\tau)\psi(x,t,\tau) d\tau dt $.\\ $\text{Let } {f_j}(x)=\int_{-s}^s\frac{w(t_j,\tau)}{p(t_j)}\psi(x,t_j,\tau)d\tau, ~\hat{f_j}(x)=\int_{-s}^s\frac{w(t_j,\tau)}{p(t_j)}\hat{\psi}(x,t_j,\tau)d\tau, \text{ and }$
$\tilde{f_j}(x)=\frac{s}{n}\sum_{k=-n}^n\frac{w(t_j,\frac{ks}{n})}{p(t_j)}\hat{\psi}(x,t_j,\frac{ks}{n}).$
We have the following: $\mathbb{E}_{t}(f_j)=f$, and $\frac{1}{m}\mathbb{E}_{t}(\sum_{j=1}^m f_j)=f$.
Consider the Hilbert space $\mathcal{L}^2(\XX,\rho_{\XX})$, with dot product:
$\scalT{f}{g}_{\mathcal{L}^2(\XX,\rho_{\XX})}=\int_{\XX} f(x)g(x)d\rho_{\XX}(x)$.\\
Note that : $\int_{-s}^s g(\tau)d\tau \leq \sqrt{2s}\sqrt{\int_{-s}^s g^2(\tau)d\tau}$ 
$$|| f_j||_{\mathcal{L}^2(\XX,\rho_{\XX})} = \sqrt{\int_{\XX} \left(\int_{-s}^s\frac{w(t_j,\tau)}{p(t_j)}\psi(x,t_j,\tau)d\tau \right)^2d\rho_{\XX}(x)}\leq (2sC),$$
Fix $\delta_1>0$, applying Lemma \ref{lem:ConcH} we have therefore with probability $1-\delta_1$:\\
\begin{equation}\label{eq:1}
\nor{\frac{1}{m}\sum_{j=1}^m f_j -f}_{\mathcal{L}^2(\XX,\rho_{\XX})}\leq \frac{2sC}{\sqrt{m}}\left(1+\sqrt{2\log\left(\frac{1}{\delta_1}\right)}\right),
\end{equation}
Now turn to:
\begin{equation*}
\nor{\frac{1}{m}\sum_{j=1}^m (\hat{f_j} -f_j)}_{\mathcal{L}^2(\XX,\rho_{\XX})}\leq \frac{1}{{m}}\sum_{j=1}^m \nor{\hat{f}_j-f_j}_{\mathcal{L}^2(\XX,\rho_{\XX})}, 
\end{equation*}
\begin{align*}
\nor{\hat{f}_j-f_j}^2_{\mathcal{L}^2(\XX,\rho_{\XX})}&= \int_{\XX} \left(\int_{-s}^s\frac{w(t_j,\tau)}{p(t_j)}(\psi(x,t_j,\tau)-\hat{\psi}(x,t_j,\tau))d\tau \right)^2d\rho_{\XX}(x) \\
& \leq 2s  \int_{\XX} \int_{-s}^s\frac{w^2(t_j,\tau)}{p^2(t_j)}(\psi(x,t_j,\tau)-\hat{\psi}(x,t_j,\tau))^2d\tau d\rho_{\XX}(x)\\
&\leq  2sC^2 \int_{\XX} \int_{-s}^{s}(\hat{\psi}(x,t_j,\tau)-\psi(x,t_j,\tau))^2d\tau d\rho_{\XX}(x)\\
&=  2sC^2 \int_{-s}^s \int_{\XX}(\hat{\psi}(x,t_j,\tau)-\psi(x,t_j,\tau))^2 d\rho_{\XX}(x) d\tau\\
&= 2sC^2  \int_{-s}^s \nor{\hat{\psi}(.,t_j,\tau)-\psi(.,t_j,\tau)}^2_{\mathcal{L}^2(\XX,\rho_{\XX})} d\tau\\
&\leq (2sC)^2 \sup_{\tau,j=1\dots m}\nor{\hat{\psi}(.,t_j,\tau)-\psi(.,t_j,\tau)}^2_{\mathcal{L}^2(\XX,\rho_{\XX})}.
\end{align*}
Recall that: $\hat{\psi}(x,t,\tau)= \frac{1}{|G|}\sum_{i=1}^{|G|} \ind_{\scalT{g_it}{x}\leq \tau}$, and $\psi(x,t,\tau)=\mathbb{E}_{g}\hat{\psi}(x,t,\tau)$.\\
Clearly $\nor{\ind_{\scalT{.}{gt}\leq \tau}}_{\mathcal{L}_{2}(\XX,\rho_{\XX})}\leq 1$, hence applying again Lemma \ref{lem:ConcH}, for $\delta_2>0$ we have with probability $1-\delta_2$:
$$\nor{\hat{\psi}(.,t_j,\tau)-\psi(.,t_j,\tau)  }^2_{\mathcal{L}^2(\XX,\rho_{\XX})}\leq \frac{1}{|G|}\left( 1+\sqrt{2\log\left(\frac{1}{\delta_2} \right)}\right)^2,$$
It follows that: 
$\forall j=1\dots m ,\nor{\hat{f}_j-f_j}\leq \frac{2Cs}{\sqrt{|G|}}\left( 1+\sqrt{2\log\left(\frac{1}{\delta_2} \right)}\right)$, with probability $1-m\delta_2$.
Hence with probability $1-m\delta_2$, we have:
\begin{equation}\label{eq:2}
\nor{\frac{1}{m}\sum_{j=1}^m (\hat{f_j} -f_j)}_{\mathcal{L}^2(\XX,\rho_{\XX})}\leq \frac{2Cs}{\sqrt{|G|}}\left( 1+\sqrt{2\log\left(\frac{1}{\delta_2} \right)}\right).
\end{equation}
and by the approximation of a Riemann sum we have that:
\begin{equation}\label{eq:3}
\nor{\frac{1}{m}\sum_{j=1}^m (\hat{f_j} -\tilde{f_j})}_{\mathcal{L}^2(\XX,\rho_{\XX})}\leq \frac{2sC}{n} .
\end{equation}
It is clear that $\tilde{f}=\frac{1}{m}\sum_{j=1}^m \tilde{f}_j \in \tilde{\mathcal{F}}$, hence, putting together equations \eqref{eq:1},\eqref{eq:2}, and \eqref{eq:3} we finally have:
\begin{align*}
\nor{\frac{1}{m}\sum_{j=1}^m \tilde{f_j} -f}_{\mathcal{L}^2(\XX,\rho_{\XX})}&\leq\nor{\frac{1}{m}\sum_{j=1}^m (\tilde{f_j}-\hat{f}_j)}_{\mathcal{L}^2(\XX,\rho_{\XX})}+\nor{\frac{1}{m}\sum_{j=1}^m (\hat{f_j} -f_j)}_{\mathcal{L}^2(\XX,\rho_{\XX})}+\nor{\frac{1}{m}\sum_{j=1}^m f_j -f}_{\mathcal{L}^2(\XX,\rho_{\XX})}\\
&\leq  \frac{2sC}{n} +\frac{2Cs}{\sqrt{|G|}}\left( 1+\sqrt{2\log\left(\frac{1}{\delta_2} \right)}\right)+ \frac{2sC}{\sqrt{m}}\left(1+\sqrt{2\log\left(\frac{1}{\delta_1}\right)}\right)
\end{align*} 
with probability $1-\delta_1-m\delta_2$.
\end{proof}

The following Lemma shows how the approximation of functions in $\mathcal{F}_{p}$, by functions in $\tilde{\mathcal{F}}$, translates to the expected Risk:
\begin{lemma}[Bound on the Approximation Error]\label{lem:comp}
Let $f\in \mathcal{F}_{p}$, fix $\delta_1,\delta_2>0$. There exists a function $\tilde{f} \in \tilde{\mathcal{F}}$, such that:
$$\mathcal{E}_{V}(\tilde{f})\leq \mathcal{E}_{V}(f)+  \frac{2sLC}{\sqrt{m}}\left(1+\sqrt{2\log\left(\frac{1}{\delta_1}\right)}\right)+L\left(\frac{2sC}{\sqrt{|G|}}\left( 1+\sqrt{2\log\left(\frac{m}{\delta_2} \right)}\right)+\frac{2sC}{n} \right), $$
with probability at least $1-\delta_1-\delta_2$.
\end{lemma}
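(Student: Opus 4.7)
\noindent The plan is to reduce the bound on the expected risk to the $L^2(\XX,\rho_\XX)$ approximation already established in Lemma \ref{lem:approx}, using only the Lipschitz property of the loss $V$. Concretely, I would choose $\tilde f \in \tilde{\mathcal F}$ to be the function produced by Lemma \ref{lem:approx}, i.e.\ $\tilde f = \frac{1}{m}\sum_{j=1}^m \tilde f_j$, and then control the risk gap pointwise.

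\noindent First I would use that $V$ is $L$-Lipschitz and that $y\in \YY = \{\pm 1\}$, to get the pointwise bound
$$|V(y\tilde f(x)) - V(yf(x))| \;\leq\; L\,|y|\,|\tilde f(x)-f(x)| \;=\; L\,|\tilde f(x)-f(x)|.$$
Integrating against $\rho_\XX(x)\,dx$ and summing over $y\in\YY$, using $\sum_{y\in\YY}\rho_y(x)=1$, gives
$$\mathcal{E}_V(\tilde f) - \mathcal{E}_V(f) \;\leq\; L\int_{\XX}|\tilde f(x) - f(x)|\,d\rho_\XX(x).$$
Since $\rho_\XX$ is a probability measure on $\XX$, Jensen's inequality (equivalently Cauchy--Schwarz with the constant function $1$) yields
$$\int_\XX |\tilde f - f|\,d\rho_\XX \;\leq\; \nor{\tilde f - f}_{\mathcal{L}^2(\XX,\rho_\XX)}.$$

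\noindent The last step is to invoke Lemma \ref{lem:approx} for the same $\tilde f$: with probability at least $1-\delta_1-\delta_2$, the $L^2$ norm on the right is bounded by $\frac{2sC}{\sqrt m}(1+\sqrt{2\log(1/\delta_1)}) + \frac{2sC}{\sqrt{|G|}}(1+\sqrt{2\log(m/\delta_2)}) + \frac{2sC}{n}$. Multiplying by $L$ and rearranging gives exactly the claimed inequality. The statistical error splits naturally into the template-sampling term (rate $1/\sqrt m$), the group-sampling term (rate $1/\sqrt{|G|}$ with a $\log m$ factor coming from a union bound over the $m$ templates), and the deterministic binning error $1/n$.

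\noindent I expect no real obstacle here: the Lipschitz-plus-Jensen reduction is standard, and all the probabilistic work has already been absorbed into Lemma \ref{lem:approx}. The only minor point to be careful about is that $\mathcal{E}_V$ is defined as an integral over all of $\XX = \XX_0 \cup \GG\XX_0$, so the bound must be derived on $\XX$ rather than on the core set $\XX_0$; the reduction to $\XX_0$ from Lemma \ref{lem:dense} applies to $f\in\mathcal{F}_p$ but $\tilde f$ need not lie in $\mathcal{F}_p$, so I avoid that reduction and work with the risk on the full space $\XX$, where the Lipschitz argument is valid without any invariance assumption on $\tilde f$.
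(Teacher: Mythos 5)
Your proof is correct and follows essentially the same route as the paper: pick $\tilde f$ from Lemma \ref{lem:approx}, use the $L$-Lipschitz property of $V$ to bound the risk gap by $L\int_{\XX}|\tilde f - f|\,d\rho_{\XX}$, pass from $L^1$ to $L^2$ via Jensen, and invoke Lemma \ref{lem:approx}. Your closing remark about working on the full space $\XX$ rather than the core set is a sensible precaution, and is consistent with what the paper does.
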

\begin{proof}[Proof of Lemma \ref{lem:comp}]$\mathcal{E}_{V}(\tilde{f})-\mathcal{E}_{V}(f) \leq \int_{\XX}\left|V(y\tilde{f}(x))-V(yf(x))\right|d\rho_{\XX}(x)\leq L \int_{\XX}|\tilde{f}(x)-f(x)|d\rho_{\XX}(x)\leq L \sqrt{\int_{\XX}(\tilde{f}(x)-f(x))^2d\rho_{\XX}(x)}=L \nor{\tilde{f}-f}_{\mathcal{L}^2(\XX,\rho_{\XX})},$ where we used the Lipschitz condition and Jensen inequality. The rest of the proof follows from Lemma \ref{lem:approx}. 
\end{proof}
The following Lemma gives a bound on the estimation of the expected Risk with finite training samples:
\begin{lemma}[Bound on the Estimation Error] \label{lem:est} Fix $\delta >0$, then
$$\sup_{f\in \tilde{\mathcal{F}}} \left|\mathcal{E}_{V}(f)-\hat{\EE}_{V}(f)\right|\leq \frac{1}{\sqrt{N}}\left(4LsC+2V(0)+LC\sqrt{\frac{1}{2}\log \left(\frac{1}{\delta}\right)}\right),$$
with probability $1-\delta$.
\end{lemma}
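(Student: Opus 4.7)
This is a classical uniform-deviation bound for a bounded, Lipschitz-composed function class; I would prove it by combining a bounded-differences concentration inequality with a symmetrization/Rademacher-complexity argument. The key structural observation is that every $f\in\tilde{\mathcal{F}}$ is uniformly bounded: using $\nor{w}_\infty\leq C/m$ together with $0\leq\hat{\psi}\leq 1$, the triangle inequality gives
$$|f(x)|\;\leq\;\frac{s}{n}\cdot m(2n+1)\cdot \frac{C}{m}\;\leq\;3sC,$$
so $|V(yf(x))|\leq V(0)+3LsC$ and the composed loss class is uniformly bounded and Lipschitz in $f$.

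First I would apply McDiarmid's bounded-differences inequality to $\Phi_N:=\sup_{f\in\tilde{\mathcal{F}}}|\mathcal{E}_V(f)-\hat{\EE}_V(f)|$. Replacing one training pair $(x_i,y_i)$ changes each loss term by at most $2L\sup_{f\in\tilde{\mathcal{F}}}|f|$, so the bounded-differences constant is $O(LsC/N)$ (sharpened to $LC/N$ if one exploits only the Lipschitz-controlled fluctuation of $V\circ f$ rather than its full range), giving
$$\Phi_N\;\leq\;\mathbb{E}\Phi_N+\frac{LC}{\sqrt{N}}\sqrt{\tfrac{1}{2}\log(1/\delta)}$$
with probability at least $1-\delta$.

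Next I would bound $\mathbb{E}\Phi_N$ by symmetrization followed by contraction. Introducing i.i.d.\ Rademacher signs $\sigma_i$,
\begin{align*}
\mathbb{E}\Phi_N\;\leq\;2\,\mathbb{E}\sup_{f\in\tilde{\mathcal{F}}}\Bigl|\frac{1}{N}\sum_{i=1}^N\sigma_i V(y_if(x_i))\Bigr|\;\leq\;4L\,R_N(\tilde{\mathcal{F}})+\frac{2V(0)}{\sqrt{N}},
\end{align*}
where the Ledoux--Talagrand contraction principle is applied to the $L$-Lipschitz loss $V$ after isolating its offset $V(0)$. The Rademacher complexity of $\tilde{\mathcal{F}}$ is then controlled by $\ell_1/\ell_\infty$ duality and Khintchine's inequality: since $|\hat{\psi}|\leq 1$,
\begin{align*}
R_N(\tilde{\mathcal{F}})\;\leq\;\frac{sC}{mn}\,\mathbb{E}_\sigma\sum_{j,k}\Bigl|\frac{1}{N}\sum_i\sigma_i\hat{\psi}(x_i,t_j,sk/n)\Bigr|\;\leq\;\frac{sC(2n+1)}{n\sqrt{N}}\;=\;O\!\left(\frac{sC}{\sqrt{N}}\right),
\end{align*}
which combined with the factor $4L$ from contraction yields the $4LsC/\sqrt{N}$ contribution in the statement.

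\textbf{Main obstacle.} The only delicate point is matching the stated constants exactly---in particular the absence of $sC$ in the McDiarmid term. Obtaining $LC\sqrt{\tfrac{1}{2}\log(1/\delta)}$ rather than the naive $(V(0)+LsC)\sqrt{2\log(1/\delta)}$ requires a careful accounting of the bounded-differences constant, using only the Lipschitz-controlled fluctuation of $V(yf(x))$ (not the full range of the loss), or equivalently a Talagrand/Bousquet-type refinement for the empirical process. The remaining steps---symmetrization, contraction, and the Rademacher complexity computation---are entirely standard and should go through as sketched.
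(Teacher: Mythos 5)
Your proof is correct and follows essentially the same route as the paper: the paper invokes the Rahimi--Recht generalization bound (Theorem~5 in the appendix, itself proved by McDiarmid's inequality plus symmetrization plus contraction, which are exactly the steps you inline) and then bounds $\mathcal{R}_{N}(\tilde{\mathcal{F}})$ by the same H\"older-then-Jensen/Khintchine computation you give. The constant mismatch you flag as the ``main obstacle'' is real but lies in the paper itself rather than in your argument: the paper's Rademacher computation silently drops the factor $(2n+1)/n$ in its final line, and its application of Theorem~5 writes $LC$ in the deviation term where the uniform bound on $f\in\tilde{\mathcal{F}}$ is actually of order $sC$, so your more careful accounting differs from the stated result only in absolute constants.
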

\begin{proof}
The proof follows from Theorem \ref{theo:Rad} given in Appendix \ref{app:TT}. 
It is sufficient to bound the Rademacher complexity of the class $\tilde{\mathcal{F}}$:
\begin{align*}
\mathcal{R}_{N}(\tilde{\mathcal{F}})&=\mathbb{E}_{x,\sigma}\left[\sup_{f\in\tilde{\mathcal{F}}}\left|\frac{1}{N}\sum_{i=1}^N \sigma_i f(x_i)\right|\right]=\mathbb{E}_{x,\sigma}\left[\sup_{f\in\tilde{\mathcal{F}}}\left|\frac{s}{Nn}\sum_{i=1}^N \sigma_i\left(\sum_{j=1}^m \sum_{k=-n}^n w_{j,k}\hat{\psi}\left(x_i,t_j,\frac{sk}{n}\right)\right) \right|\right]\\
&= \mathbb{E}_{x,\sigma}\left[\sup_{f\in\tilde{\mathcal{F}}}\left|\frac{s}{Nn}\sum_{j=1}^m \sum_{k=-n}^n w_{j,k} \sum_{i=1}^N \sigma_i\hat{\psi}\left(x_i,t_j,\frac{sk}{n}\right) \right|\right]\\
&\leq \mathbb{E}_{x,\sigma} \frac{sC}{mNn}\sum_{j=1}^m \sum_{k=-n}^n\left|\sum_{i=1}^N \sigma_i\hat{\psi}\left(x_i,t_j,\frac{sk}{n}\right)\right| \text{ By Holder inequality: $\scalT{a}{b}\leq \nor{a}_{\infty}\nor{b}_{1}$}\\
&\leq \frac{sC}{mNn} \mathbb{E}_{x}\sum_{j=1}^m \sum_{k=-n}^n\sqrt{\mathbb{E}_{\sigma}\left(\sum_{i=1}^N \sigma_i\hat{\psi}\left(x_i,t_j,\frac{sk}{n}\right)\right)^2} \text{Jensen inequality, concavity of square root}
\end{align*} 
Note that $\mathbb{E}(\sigma_i\sigma_j)=0$, for $i \neq j$ it follows that:\\
$\mathbb{E}_{\sigma}\left(\sum_{i=1}^N \sigma_i\hat{\psi}\left(x_i,t_j,\frac{sk}{n}\right)\right)^2= \mathbb{E}_{\sigma} \sum_{i=1}^N \sum_{\ell=1}^N \sigma_i\sigma_{\ell}\hat{\psi}\left(x_i,t_j,\frac{sk}{n}\right)\hat{\psi}\left(x_{\ell},t_j,\frac{sk}{n}\right)= \sum_{i=1}^N\hat{\psi}^2\left(x_i,t_j,\frac{sk}{n}\right) \leq N$, since $\hat{\psi}(.,.,.)\leq 1$.
Finally: 
$$\mathcal{R}_{m}(\tilde{\mathcal{F}})\leq \frac{Cs}{\sqrt{N}}.$$
\end{proof}
\noindent We are now ready to prove Theorem \ref{theo:LearningInv}:
\begin{proof}[Proof of Theorem \ref{theo:LearningInv}] Let $f^*_{N}=\argmin_{f\in \tilde{\mathcal{F}}}\hat{\EE}_{V}(f)$,  $\tilde{f}=\argmin_{f\in \tilde{\mathcal{F}}}\mathcal{E}_{V}(f)$,
$f_{p}=\argmin_{f\in \mathcal{F}_{p}}\mathcal{E}_{V}(f)$.
\begin{align*}
\mathcal{E}_{V}(f^*_{N})-\min_{f\in \mathcal{F}_p}\mathcal{E}_{V}(f)&=\underbrace{\left(\mathcal{E}_{V}(f^*_{N})-\mathcal{E}_{V}(\tilde{f})\right)}_{\text{Statistical Error}}+\underbrace{\left(\mathcal{E}_{V}(\tilde{f})-\mathcal{E}_{V}(f_{p})\right)}_{\text{Approximation Error}}
\end{align*}
The first term is the usual estimation or statistical error than we can bound using Lemma \ref{lem:est}, we have:
\begin{align*}
\mathcal{E}_{V}(f^*_{N})-\mathcal{E}_{V}(\tilde{f})&=\left(\mathcal{E}_{V}(f^*_{N})- \hat{\EE}_{V}(f^*_{N})\right)+\underbrace{\left( \hat{\EE}_{V}(f^*_{N})-\hat{\EE}_{V}(\tilde{f})\right)}_{\leq 0,\text{by optimality of $f^*_{N}$}}+\left(\hat{\EE}_{V}(\tilde{f})-\EE_{V}(\tilde{f})\right) \\
&\leq 2\sup_{f\in \tilde{\mathcal{F}}} \left|\mathcal{E}_{V}(f)-\hat{\EE}_{V}(f)\right|\\
&\leq 2 \frac{1}{\sqrt{N}}\left(4LsC+2V(0)+LC\sqrt{\frac{1}{2}\log \left(\frac{1}{\delta}\right)}\right),
\end{align*}
with probability $1-\delta$ over the training samples.
Let $\tilde{f}_p$, the function defined in Lemma \ref{lem:approx}, that approximates $f_{p}$ in $\tilde{\mathcal{F}}$.  
By Lemma \ref{lem:comp} we know that:
$$\mathcal{E}_{V}(\tilde{f}_p)\leq \mathcal{E}_{V}(f_p)+  \frac{2sLC}{\sqrt{m}}\left(1+\sqrt{2\log\left(\frac{1}{\delta_1}\right)}\right)+L\left(\frac{2sC}{\sqrt{|G|}}\left( 1+\sqrt{2\log\left(\frac{m}{\delta_2} \right)}\right)+\frac{2sC}{n} \right), $$
with probability $1-\delta_1-\delta_2$, on the choice of the templates and the sampled group elements.
By optimality of $\tilde{f} \in \tilde{\mathcal{F}}$, we have $$\EE_{V}(\tilde{f})\leq \EE_{V}(\tilde{f}_p)\leq \mathcal{E}_{V}(f_p)+  \frac{2sLC}{\sqrt{m}}\left(1+\sqrt{2\log\left(\frac{1}{\delta_1}\right)}\right)+L\left(\frac{2sC}{\sqrt{|G|}}\left( 1+\sqrt{2\log\left(\frac{m}{\delta_2} \right)}\right)+\frac{2sC}{n} \right)$$
Hence by a union bound with probability $1-\delta-\delta_1-\delta_2$, on the training set , the templates and the group elements we have:
\begin{align*}
\mathcal{E}_{V}(f^*_{N})-\min_{f\in \mathcal{F}_p}\mathcal{E}_{V}(f) &\leq  2 \frac{1}{\sqrt{N}}\left(4LsC+2V(0)+LC\sqrt{\frac{1}{2}\log \left(\frac{1}{\delta}\right)}\right)\\
&+\frac{2sLC}{\sqrt{m}}\left(1+\sqrt{2\log\left(\frac{1}{\delta_1}\right)}\right)+L\left(\frac{2sC}{\sqrt{|G|}}\left( 1+\sqrt{2\log\left(\frac{m}{\delta_2} \right)}\right)+\frac{2sC}{n} \right).
\end{align*}
\end{proof}

%
%
%


\section{Technical tools}\label{app:TT}
\begin{theorem} \cite{dvoretzky1956}\label{theo:empCDF}
 Let $X_1, X_2, ..., X_m$ be i.i.d. random variables with cumulative distribution function $F$, and let $\hat{F}_m$ be the associated empirical cumulative density
function $\hat{F}_m=\frac{1}{m}\sum_{i=1}^m  \ind_{X_i\leq \tau}$. Then for any  $\gamma>0$
 $$\mathbb{P}\left\{\sup_{\tau}\left|\hat{F}_{m}(\tau)-F(\tau)\right|>\gamma\right\}\leq 2\exp\left(-2m\gamma^2\right).$$
\end{theorem}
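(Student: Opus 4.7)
The plan is to prove this classical Dvoretzky--Kiefer--Wolfowitz--Massart inequality by first reducing to the case of uniforms on $[0,1]$, then splitting the two-sided supremum into one-sided tails by symmetry, and finally establishing the one-sided exponential bound. First I would invoke the probability integral transform: when $F$ is continuous (the general case follows by a standard monotone-approximation argument, and ties only decrease the supremum), the random variables $U_i = F(X_i)$ are i.i.d.\ uniform on $[0,1]$, and the change of variable $\tau = F^{-1}(t)$ identifies $\sup_\tau|\hat F_m(\tau) - F(\tau)|$ in distribution with $\sup_{t\in[0,1]}|\hat G_m(t) - t|$, where $\hat G_m$ is the empirical CDF of the $U_i$'s. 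By the reflection symmetry $U_i \mapsto 1 - U_i$, the two one-sided suprema $\sup_t (\hat G_m(t)-t)$ and $\sup_t(t-\hat G_m(t))$ have the same distribution, so a union bound reduces the problem to establishing the one-sided tail $\mathbb{P}\{\sup_t(\hat G_m(t) - t) > \gamma\} \leq e^{-2m\gamma^2}$.

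Next I would exploit the fact that $\hat G_m$ is piecewise constant with jumps exactly at the order statistics $U_{(1)} \leq \cdots \leq U_{(m)}$, so that the one-sided supremum is attained just before a jump and satisfies
$$\sup_t(\hat G_m(t)-t) = \max_{1\leq k\leq m}\bigl(\tfrac{k-1}{m} - U_{(k)}\bigr)_+.$$
For any fixed $t\in[0,1]$, $m\hat G_m(t)$ is $\mathrm{Bin}(m,t)$, and Hoeffding's inequality for sums of bounded Bernoullis gives the sharp pointwise bound $\mathbb{P}\{\hat G_m(t)-t > \gamma\}\leq e^{-2m\gamma^2}$. The nontrivial step is to upgrade this pointwise bound to the uniform supremum over $t$ without losing the constant $2$ in the exponent.

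For that I would follow Massart's 1990 argument: condition on the vector of order statistics $(U_{(1)},\dots,U_{(m)})$ and exploit exchangeability of the uniform spacings, combined with a conditional exponential-moment estimate indexed by $k$, to bound the probability that any of the $m$ candidate maxima exceeds $\gamma$ by a single Chernoff-type estimate on the extremal $k$, rather than by an $m$-fold union bound. The main obstacle is precisely this uniformization step: a naive union bound over the $m$ candidate points, or even a peeling/chaining argument on dyadic levels of $t$, would introduce a prefactor such as $\log m$ or $m$ that spoils the exponent, and recovering the sharp $2m\gamma^2$ matching the Kolmogorov--Smirnov limiting distribution requires Massart's subtle conditional rearrangement. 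Once this one-sided bound is in place, the claimed $2\,e^{-2m\gamma^2}$ two-sided bound follows immediately from the symmetry step above.
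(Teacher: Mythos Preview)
Your outline is a faithful sketch of the Massart (1990) sharpening of the DKW inequality, and the strategy---reduce to uniforms via the probability integral transform, use reflection symmetry to reduce to a one-sided bound, then control the one-sided supremum by Massart's conditional exponential-moment argument on the order statistics---is correct in spirit. The one technical caveat is that you acknowledge but do not actually carry out the ``nontrivial step'': upgrading the pointwise Hoeffding bound to a uniform bound without losing constants is the entire content of Massart's paper, and your proposal essentially says ``do what Massart did'' rather than executing it. That is fine for a proof sketch but would not stand as a self-contained proof.

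However, the comparison you were asked to make is moot: the paper does not prove this statement at all. Theorem~\ref{theo:empCDF} appears in the appendix under ``Technical tools'' and is simply quoted with a citation to \cite{dvoretzky1956}; it is invoked as a black box in the proof of Theorem~\ref{theo:InvJL} to control the empirical CDF of the group-projected inner products. So there is no ``paper's own proof'' to compare against---you have supplied a proof outline where the authors supplied only a reference.
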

\begin{lemma}[\cite{RahimiR08},Concentration of the mean of bounded random variables in a Hilbert Space] \label{lem:ConcH}Let $(\mathcal{H},\scalT{.}{.}_{\mathcal{H}})$ be a Hilbert space. Let $X_j$, $j=1\dots K$, be iid random, such that $||X_j||_{\mathcal{H}}\leq M$. Then for any $\delta>0$, with probability $1-\delta$,
$$\nor{ \frac{1}{K}\sum_{j=1}^K X_j - \frac{1}{K}\mathbb{E}\sum_{j=1}^K X_j}_{\mathcal{H}} \leq \frac{M}{\sqrt{K}}\left(1+\sqrt{2\log\left(\frac{1}{\delta}\right)}\right). $$ 
\end{lemma}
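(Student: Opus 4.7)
The plan is to reduce this Hilbert-space concentration inequality to a scalar concentration bound via McDiarmid's bounded differences inequality applied to the real-valued function
$$F(X_1,\dots,X_K) := \nor{\frac{1}{K}\sum_{j=1}^K X_j - \frac{1}{K}\mathbb{E}\sum_{j=1}^K X_j}_{\mathcal{H}}.$$
First I would verify the bounded-differences property: if $X_i$ is replaced by an independent copy $X_i'$ while all other coordinates are held fixed, then by the reverse triangle inequality on $\nor{\cdot}_{\mathcal H}$ the value of $F$ changes by at most $\tfrac{1}{K}\nor{X_i-X_i'}_{\mathcal H}\leq \tfrac{2M}{K}$. This is the only place the Hilbert-space structure enters nontrivially; $F$ itself is a perfectly ordinary scalar random variable, so the classical McDiarmid inequality applies and yields
$$\mathbb{P}\bigl(F \geq \mathbb{E} F + t\bigr)\leq \exp\!\left(-\frac{2t^2}{K\cdot (2M/K)^2}\right)=\exp\!\left(-\frac{Kt^2}{2M^2}\right).$$
Solving $\exp(-Kt^2/(2M^2))=\delta$ gives $t = M\sqrt{2\log(1/\delta)/K}$, so with probability at least $1-\delta$,
$$F \leq \mathbb{E} F + \frac{M}{\sqrt{K}}\sqrt{2\log(1/\delta)}.$$

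It remains to control the expectation $\mathbb{E} F$. Here I would use Jensen's inequality to pass from $\mathbb{E}\nor{\cdot}$ to $\sqrt{\mathbb{E}\nor{\cdot}^2}$, and then exploit independence of the centered variables $Y_j := X_j - \mathbb{E} X_j$. Expanding the squared norm in the Hilbert inner product,
$$\mathbb{E}\nor{\frac{1}{K}\sum_j Y_j}_{\mathcal H}^2 = \frac{1}{K^2}\sum_{j,\ell}\mathbb{E}\scalT{Y_j}{Y_\ell}_{\mathcal H} = \frac{1}{K^2}\sum_j \mathbb{E}\nor{Y_j}_{\mathcal H}^2,$$
where the cross terms vanish by independence and centering. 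Since $\mathbb{E}\nor{Y_j}^2_{\mathcal H}\leq \mathbb{E}\nor{X_j}^2_{\mathcal H}\leq M^2$, this gives $\mathbb{E} F \leq M/\sqrt{K}$.

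Combining the two displays yields precisely $F \leq \frac{M}{\sqrt{K}}\bigl(1+\sqrt{2\log(1/\delta)}\bigr)$ with probability at least $1-\delta$, as stated. The only mildly delicate step is the bounded-differences verification in the Hilbert setting, but the reverse triangle inequality makes it immediate; everything else is a standard Jensen-plus-orthogonality variance calculation followed by a direct appeal to McDiarmid. No additional structure beyond the boundedness hypothesis $\nor{X_j}_{\mathcal H}\leq M$ and the i.i.d.\ assumption is required.
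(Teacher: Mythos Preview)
Your proof is correct and follows the standard McDiarmid-plus-Jensen argument that underlies this result. The paper itself does not supply a proof of this lemma: it is quoted verbatim as a technical tool attributed to \cite{RahimiR08}, so there is no in-paper argument to compare against. Your derivation is exactly the one given in the cited reference, so nothing further is needed.
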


\begin{theorem}[\cite{RahimiR08}]\label{theo:Rad}
Let $\mathcal{F}$ be a bounded class of function, $\sup_{x\in\XX}\left|f(x)\right|\leq C$ for all $f\in \mathcal{F}$. Let $V$ be an $L$-Lipschitz loss. Then with probability $1-\delta$, with respect to training samples $\{x_i,y_i\}_{i=1\dots N}$,every $f$ satisfies:
$$\mathcal{E}_{V}(f)\leq \hat{\mathcal{E}}_{V}(f)+4L \mathcal{R}_{N}(\mathcal{F})+\frac{2V(0)}{\sqrt{N}}+LC\sqrt{\frac{1}{2N}\log\frac{1}{\delta}},$$
where $\mathcal{R}_{N}(\mathcal{F})$ is the Rademacher complexity of the class $\mathcal{F}$:
$$\mathcal{R}_{N}(\mathcal{F})=\mathbb{E}_{x,\sigma}\left[\sup_{f\in \mathcal{F}}\left|\frac{1}{N}\sum_{i=1}^N \sigma_i f(x_i)\right|\right],$$
the variables $\sigma_i$ are iid symmetric Bernoulli random variables taking value in $\{-1,1\}$, with equal probability and are independent form $x_i$.
\end{theorem}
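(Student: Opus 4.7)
The goal is to establish a uniform bound on the generalization gap $\mathcal{E}_V(f) - \hat{\mathcal{E}}_V(f)$ over $f\in\mathcal{F}$ in terms of the Rademacher complexity $\mathcal{R}_N(\mathcal{F})$. My plan is to follow the classical three-step recipe: McDiarmid concentration of the uniform gap around its expectation, symmetrization via a ghost sample to introduce Rademacher variables, and the Ledoux--Talagrand contraction principle to pass from the loss class $V\circ\mathcal{F}$ to $\mathcal{F}$ itself. Each of the three terms on the right-hand side of the stated bound will be produced by one of these steps, together with a separate bookkeeping of the constant $V(0)$.

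First, define $Z(S)=\sup_{f\in\mathcal{F}}(\mathcal{E}_V(f)-\hat{\mathcal{E}}_V(f))$. Using $\sup_x|f(x)|\le C$ and the $L$-Lipschitz property, the centered losses $V(y_i f(x_i))-V(0)$ lie in $[-LC,LC]$, so swapping a single training pair perturbs $Z(S)$ by at most $LC/N$. McDiarmid's bounded-differences inequality then gives
\[
Z(S) \le \mathbb{E}_S[Z(S)] + LC\sqrt{\frac{\log(1/\delta)}{2N}}
\]
with probability $1-\delta$, which reproduces the last term of the stated bound.

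Second, a standard ghost-sample symmetrization yields
\[
\mathbb{E}_S[Z(S)] \le 2\,\mathbb{E}_{S,\sigma}\Bigl[\sup_{f\in\mathcal{F}}\frac{1}{N}\sum_{i=1}^N \sigma_i V(y_i f(x_i))\Bigr].
\]
Splitting $V(u)=(V(u)-V(0))+V(0)$, the constant pulls outside the supremum and contributes at most $V(0)\,\mathbb{E}\bigl|\tfrac{1}{N}\sum_i \sigma_i\bigr|\le V(0)/\sqrt{N}$ by Cauchy--Schwarz; together with the symmetrization factor $2$ this produces the $\frac{2V(0)}{\sqrt{N}}$ term. For the centered remainder, the map $u\mapsto V(yu)-V(0)$ is $L$-Lipschitz and vanishes at $0$, so the Ledoux--Talagrand contraction principle applied pointwise in $y_i$ gives
\[
\mathbb{E}_\sigma\Bigl[\sup_{f\in\mathcal{F}}\frac{1}{N}\sum_i \sigma_i \bigl(V(y_i f(x_i))-V(0)\bigr)\Bigr] \le 2L\,\mathbb{E}_\sigma\Bigl[\sup_{f\in\mathcal{F}}\Bigl|\frac{1}{N}\sum_i \sigma_i f(x_i)\Bigr|\Bigr] = 2L\,\mathcal{R}_N(\mathcal{F}),
\]
using that $\sigma_i y_i$ is distributed as $\sigma_i$ because $y_i\in\{\pm 1\}$. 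Multiplying by the symmetrization prefactor $2$ produces the $4L\,\mathcal{R}_N(\mathcal{F})$ term; combining the three contributions recovers the stated inequality.

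The main obstacle is constant-bookkeeping at two specific points. The contraction step requires a Lipschitz function vanishing at $0$, which forces the centering $V\mapsto V-V(0)$ and is precisely what generates the separate $V(0)/\sqrt{N}$ summand rather than letting it be absorbed. Moreover, the factor $4L$ in front of $\mathcal{R}_N(\mathcal{F})$, rather than $2L$, appears because the standard two-sided Ledoux--Talagrand contraction carries an additional factor of $2$ relative to its one-sided form. Once these two points are handled, the proof reduces to the argument given in \cite{RahimiR08}.
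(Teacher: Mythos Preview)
The paper does not actually prove this theorem: it is stated in the ``Technical tools'' appendix as a result quoted from \cite{RahimiR08}, with no accompanying argument. So there is no paper proof to compare against; your write-up is supplying a proof where the authors simply invoked the literature.

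That said, your argument is the standard one and is essentially correct. One small bookkeeping issue: you claim that swapping a single training pair perturbs $Z(S)$ by at most $LC/N$, on the grounds that the centered losses $V(y_if(x_i))-V(0)$ lie in $[-LC,LC]$. But an interval of length $2LC$ means the bounded-difference constant is $2LC/N$, not $LC/N$; plugging this into McDiarmid would give $LC\sqrt{\tfrac{2}{N}\log\tfrac{1}{\delta}}$ rather than $LC\sqrt{\tfrac{1}{2N}\log\tfrac{1}{\delta}}$. This is a factor-of-$2$ discrepancy in one term, not a structural error, and the stated constants in the theorem are inherited verbatim from \cite{RahimiR08} rather than derived here, so it does not affect anything downstream in the paper.
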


\section{Numerical Evaluation}\label{app:TT}
\subsection{Permutation Invariance Experiment}

For our first experiment, we created an artificial dataset which was designed to exploit permutation invariance, providing us with a finite group to which we had complete access. The dataset $X_{perm}$ consists of all sequences of length $L=5$, where each element of the sequence is taken from an alphabet $A$ of 8 characters, giving us a total of 32,768 data points. Two characters $c_1, c_2\in A$ were randomly chosen and designated as targets, so that a sequence $x\in X_{perm}$ is labeled positive if it contains both $c1$ and $c_2$, where the position of these characters in the sequence does not matter. Likewise, any sequence that does not contain both characters is labeled negative. This provides us with a binary classification problem (positive sequences vs. negative sequences), for which the label is preserved by permutations of the sequence indices, i.e. two sequences will belong to the same orbit if and only if they are permuted versions of one another. \\
The $i^{\text{th}}$ character in $A$ is encoded as an 8-dimensional vector which is 0 in every position but the $i^{\text{th}}$, where it is 1. Each sequence $x\in X_{perm}$ is formed by concatenating the 5 such vectors representing its characters, resulting in a binary vector of length 40. To build the permutation-invariant representation, we project a binary sequences onto an equal-length sequence consisting of standard-normal gaussian vectors, as well as all of its permutations, and then pool over the projections with a CDF.\\
As a baseline, we also used a bag-of-words representation, where each $x\in X_{perm}$ was encoded with an 8-dimensional vector with $i^{\text{th}}$ element equal to the count of how many times character $i$ appears in $x$. Note that this representation is also invariant to permutations, and so should share many of the benefits of our feature map.\\
For all classification results, 4000 points were randomly chosen from $X_{perm}$ to form the training set, with an even split of 2000 positive points and 2000 negative points. The remaining 28,768 points formed the test set. \\
We know from Theorem 3 that the expected risk is dependent on the number of templates used to encode our data and on the number of bins used in the CDF-pooling step. The right panel of Figure \ref{fig:vsG} shows RLS classification accuracy on $X_{perm}$ for different numbers of templates and bins. We see that, for a fixed number of templates, increasing the number of bins will improve accuracy, and for a fixed number of bins, adding more templates will improve accuracy. We also know there is a further dependence on the number of transformation samples from the group $G$. The left panel of Figure \ref{fig:vsG} shows how classification accuracy, for a fixed number of training points, bins, and templates, depends on the number of transformation we have access to. We see the curve is rather flat, and there is a very graceful degradation in performance. \\
In Figure \ref{fig:RLSerr}, we include the sample complexity plot (for RLS) with the error bars added. 

\begin{figure*}[h]
\centerline{\epsfig{figure=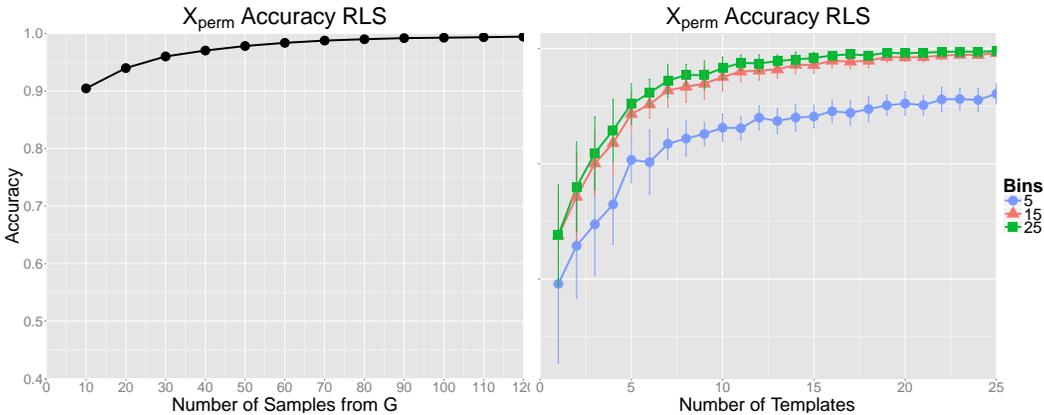, width=140mm, height=55mm}}
\caption{{\it Left) Classification accuracy of random invariant features as function of the number of sampled group elements on $X_{\text{perm}}$. Right) Classification accuracy of random invariant features as function of the number of templates and bin sizes on $X_{\text{perm}}$.}}
\label{fig:vsG}
\end{figure*}

\begin{figure*}[h]
\centerline{\epsfig{figure=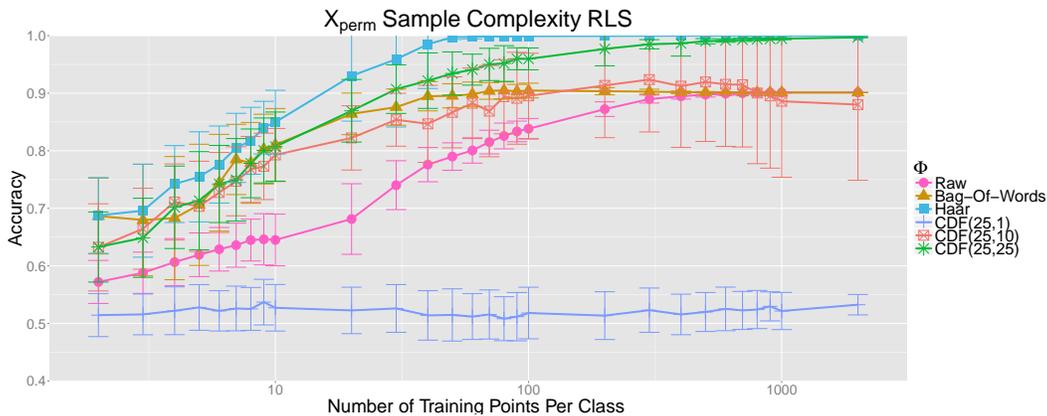, width=140mm, height=55mm}}
\caption{{\it 
Classification accuracy as a function of training set size. $\Phi = \text{CDF}(n,m)$ refers to a random feature map with $n$ bins and $m$ templates. For each training set size, the accuracy is averaged over 100 random training samples. With enough templates/bins, the random feature map outperforms the raw features as well as a bag-of-words representation (also invariant to permutation). We also train an RLS classifier with a haar-invariant kernel, which naturally gives the best performance. However, by increasing the number of templates, we come close to matching this performance with random feature maps.}}
\label{fig:RLSerr}
\end{figure*}

\subsection{TIDIGITS Experiment}

Here, we add plots (Figures \ref{fig:uttsupp},\ref{fig:womensupp} and \ref{fig:mensupp}) showing performance as a function of number of templates and bins for some other splits of the TIDIGITS data. 

\begin{figure*}[h]
\centerline{\epsfig{figure= 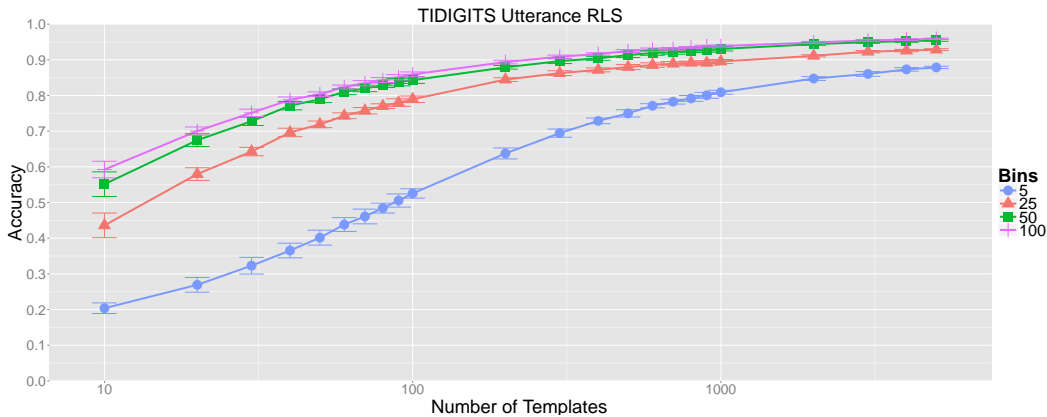, width=140mm, height=55mm}}
\caption{{\it Mean classification accuracy as a function of number of templates, $m$, and bins, $n$. Accuracy is averaged over 30 random template samples for each $m$ and error bars are displayed. In the ``Utterance'' dataset, we train and test on the same speakers, but the test set contains new utterances of each digit. This is the easiest dataset, representing only intraspeaker variability, and the performance is quite good even for a small number of bins. }}
\label{fig:uttsupp}
\end{figure*}

\begin{figure*}[h]
\centerline{\epsfig{figure= 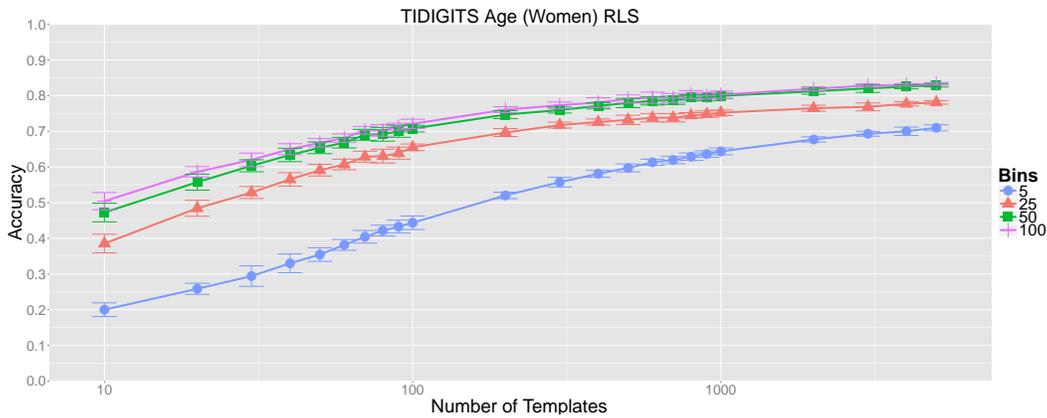, width=140mm, height=55mm}}
\caption{{\it Mean classification accuracy as a function of number of templates, $m$, and bins, $n$. Accuracy is averaged over 30 random template samples for each $m$ and error bars are displayed. In the ``Age (Women)'' dataset, we train on adult women and test on children, giving us an age mismatch. Despite this mismatch, performance remains strong.}}
\label{fig:womensupp}
\end{figure*}

\begin{figure*}[h]
\centerline{\epsfig{figure= 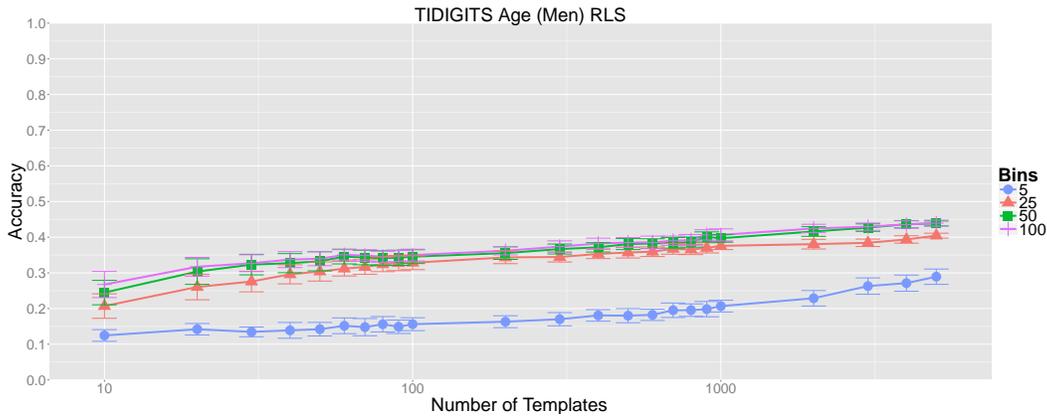, width=140mm, height=55mm}}
\caption{{\it Mean classification accuracy as a function of number of templates, $m$, and bins, $n$. Accuracy is averaged over 30 random template samples for each $m$ and error bars are displayed. In the ``Age (Men)'' dataset, we train on adult men and test on children, giving us an age mismatch. We see the weakest performance in this dataset, much worse than on the ``Age (Women)'' dataset. This is possibly due to the fact that women have higher pitched voices than men, creating less of a mismatch between women and children than men and children.}}
\label{fig:mensupp}
\end{figure*}


\end{document}